\newif\ifIEEE
\newif\ifCLASSOPTIONcompsoc
\newif\ifCLASSINFOpdf
\newif\ifCLASSOPTIONpeerreview
\newif\ifCLASSOPTIONcaptionsoff
	 \newcommand\mycite[1]{\cite{{#1}}}
	 \newcommand\mycitet[1]{\cite{{#1}}}
	 \newcommand\mycitealt[1]{\cite{{#1}}}
	 \newcommand\mycite[1]{\citep{{#1}}} 
	 \newcommand\mycitet[1]{\citet{{#1}}}
	 \newcommand\mycitealt[1]{\citealt{{#1}}}
\newcommand\myTitle{On the Definiteness of Earth Mover's Distance\\ and Its Relation to Set Intersection}
\newcommand\emd{\mathit{EMD}}
\newcommand\semd{\mathit{SEMD}}
\newcommand\emdhat{\widehat{\mathit{EMD}}}
\newcommand\emdnot{\overline{\mathit{EMD}}}
\newcommand\jemd{\mathit{JEMD}}
\newcommand\ie{i.e.\ }
\newcommand\eg{e.g.\ }
\newcommand\ie{that is, }
\newcommand\eg{for example, }
\newcommand\emji{\mathit{EMJI}}
\newcommand\emi{\mathit{EMI}}
\newcommand\discrete{\delta_{\text{0-1}}}
\newcommand\supp{\mathrm{supp}}
\newcommand\dif{\mathrm{d}}
\renewcommand\vec[1]{\mathbf{#1}}
\renewcommand\Vec[1]{\mathrm{Vec}\left({#1}\right)}
\newcommand\uflow{H}
\newcommand\dens{\chi}
\newcommand\ptransform[2]{\ifthenelse{\isempty{#2}}{\mathrm{T}\left[{#1}\right]}{\mathrm{T}^{\left(#2\right)}\left[{#1}\right]}}
\newcommand\ntransform[2]{\overline{\mathrm{T}}_{{#2}}\left[{#1}\right]}
\newcommand\EMDHat[1][]{\ifthenelse{\isempty{#1}}{$\widehat{\text{EMD}}$}{$\widehat{\text{EMD}}_{#1}$}}
\newcommand\EMDNot[1][]{\ifthenelse{\isempty{#1}}{$\overline{\text{EMD}}$}{$\overline{\text{EMD}}_{#1}$}}
\newcommand\diag[1]{\mathrm{diag}\left({#1}\right)}
\newcommand\sign[1]{\mathrm{sign}\left({#1}\right)}
\renewcommand{\vec}[1]{\boldsymbol{\mathbf{#1}}}
\newcommand{\approptoinn}[2]{\mathrel{\vcenter{
  \offinterlineskip\halign{\hfil$##$\cr
    #1\propto\cr\noalign{\kern2pt}#1\sim\cr\noalign{\kern-2pt}}}}}
\newcommand{\suchthat}{\;\ifnum\currentgrouptype=16 \middle\fi|\;}
\newtheorem{theorem}{Theorem}
\newtheorem{lemma}{Lemma}
\newtheorem{proposition}{Proposition}
\newtheorem{corollary}{Corollary}
\newtheorem{conjecture}{Conjecture}
\newtheoremstyle{mytheoremstyle} 
    {\topsep}                    
    {\topsep}                    
    {}                   
    {}                           
    {\scshape\bfseries}                   
    {.}                          
    {.5em}                       
    {}  
\theoremstyle{mytheoremstyle}
\newtheorem{definition}{Definition}
\newcommand{\refreq}[1]{(\refeq{eq:#1})}
\newif\ifexpanded
\acrodef{emd}[EMD]{earth mover's distance}
\acrodef{vbow}[VBOW]{Visual Bag-of-Words}
\acrodef{svm}[SVM]{support vector machine}
\acrodef{ksvm}[KSVM]{Krein support vector machine}
\acrodef{pd}[PD]{positive definite}
\acrodef{nd}[ND]{negative definite}
\acrodef{cpd}[CPD]{conditionally positive definite}
\acrodef{cnd}[CND]{conditionally negative definite}
\acrodef{rbf}[RBF]{radial basis function}
\acrodef{pde}[PDE]{partial differential equation}
\acrodef{emi}[EMI]{earth mover's intersection}
\acrodef{emjd}[EMJD]{earth mover's Jaccard distance}
\acrodef{emji}[EMJI]{earth mover's Jaccard index}
\acrodef{ospa}[OSPA]{optimal subpattern assignment}
\acrodef{idk}[IDK]{independence kernel}
\acrodef{pmk}[PMK]{pyramid match kernel}
\acrodef{iff}[iff]{if and only if}
\newcommand\posdef{\ac{pd}}
\newcommand\cposdef{\ac{cpd}}
\newcommand\negdef{\ac{nd}}
\newcommand\cnegdef{\ac{cnd}}
\newcommand\iffy{\acl{iff}}
\begin{document}
%
\title{\myTitle}
%
%
%
%

\ifIEEE
	\author{Andrew~Gardner,~\IEEEmembership{Student Member,~IEEE,}
	        Christian~A.~Duncan,
	        Jinko~Kanno,
	        and~Rastko~R.~Selmic,~\IEEEmembership{Senior Member,~IEEE}
	\IEEEcompsocitemizethanks{\IEEEcompsocthanksitem A. Gardner and J. Kanno are with the College of Engineering and Science, Louisiana Tech University, Ruston, LA, 71272.
	\protect\\
	E-mail: andrew.gardner1@ieee.org
	\IEEEcompsocthanksitem C. A. Duncan is with the Department of Mathematics and Computer Science at Quinnipiac University, Hamden, CT, 06518.
	\IEEEcompsocthanksitem R. R. Selmic is with the Department of Electrical and Computer Engineering at Concordia University, Montreal, Quebec, Canada.
	\protect\\
	E-mail: rastko.selmic@concordia.ca.
	}
	}
	
	%
	%

	\markboth{}
	{Gardner \MakeLowercase{\textit{et al.}}: \myTitle}
	%


\else
	\author{\name Andrew Gardner$^\dagger$ \email abg010@latech.edu
	        \AND
	        \name Christian~A.~Duncan \email christian.duncan@quinnipiac.edu\\
	\addr Department of Mathematics and Computer Science\\
	Quinnipiac University\\
	 Hamden, CT, 06518
	        \AND
	        \name Jinko~Kanno$^\dagger$ \email jkanno@latech.edu
	        \and
	        \name Rastko~R.~Selmic$^\dagger$ \email rselmic@latech.edu\\
	\addr $^\dagger$College of Engineering and Science\\
	 Louisiana Tech University\\
	  Ruston, LA, 71272}
\fi

\ifIEEE
\IEEEtitleabstractindextext{%
\fi

\begin{abstract}
Positive definite kernels are an important tool in machine learning that enable efficient solutions to otherwise difficult or intractable problems by implicitly linearizing the problem geometry.
In this paper we develop a set-theoretic interpretation of the Earth Mover's Distance (EMD) and propose Earth Mover's Intersection (EMI), a positive definite analog to EMD for sets of different sizes.
We provide conditions under which EMD or certain approximations to EMD are negative definite.
We also present a positive-definite-preserving transformation that can be applied to any kernel and can also be used to derive positive definite EMD-based kernels and show that the Jaccard index is simply the result of this transformation.
Finally, we evaluate kernels based on EMI and the proposed transformation versus EMD in various computer vision tasks and show that EMD is generally inferior even with indefinite kernel techniques.
\end{abstract}
\ifIEEE
	\ifCLASSOPTIONpeerreview
	\else
	\begin{IEEEkeywords} 
	Earth mover's distance, Monge-Kantorovich, kernel methods, Jaccard index, biotope transform.
	\end{IEEEkeywords}
	\fi
	}
\else
	\begin{keywords} 
	Earth mover's distance, Wasserstein, kernel methods,  support vector machines, Jaccard index.
	\end{keywords}
	\editor{}
\fi

\maketitle

\ifIEEE
\IEEEdisplaynontitleabstractindextext

%
\IEEEpeerreviewmaketitle
\fi

\ifCLASSOPTIONcompsoc
\IEEEraisesectionheading{\section{Introduction}\label{sec:introduction}}
\else
\section{Introduction}
\label{sec:introduction}
\fi

%
%
%
%
\ifIEEE
\IEEEPARstart{T}{he} 
\else
The 
\fi
foundations of the \ac{emd}'s definiteness are the primary topic of this paper.
\ac{emd} is a metric that measures the minimum amount of one histogram that must be altered to transform it into another.
$\ac{emd}$ is commonly used in computer vision for comparing color distribution or texture histograms of images for content based image retrieval~\mycite{cg-emduts,rtg-emdmir,ho-eemdarhc,pw-lthmism,pw-fremd}.
If each histogram is represented by piles of dirt, \ac{emd} is the minimum cost required to move the dirt of one histogram until it acquires the distribution of the other, and from this interpretation its name---first used in print by Rubner et al.\@~\mycitet{rtg-emdmir}---naturally follows.
\ac{emd}, however, has a much longer history than its use in computer vision would imply.
Gaspard Monge~\mycitet{m-mstdr} originally laid the groundwork for \ac{emd}, and the problem was reformulated in the mid-20th century by Leonid Kantorovich~\mycitet{k-otm,k-opm}.
Thus does \ac{emd} receive its other name, the Monge-Kantorovich mass transportation distance, under which it is applied in economics, fluid mechanics, meteorology, and \acp{pde}~\mycite{e-pdemkmt,g-imtta}.
In statistics, \ac{emd} is also known as the Wasserstein or Mallows distance between probability distributions~\mycite{lp-emdmdsis,vib-drtdsd}.
The Wasserstein distance is also used as a means of evaluating the performance of multiple-object trackers and filters~\mycite{hm-mmdvoa,svv-cmpemof,rvcv-mpemtta}.
For a more comprehensive description of its history, the reader is referred to Vershik's recent article on the subject~\mycite{v-lhmktp}.
\ac{emd} is normally understood to be a discretized version of the Monge-Kantorovich distance, which is defined for continuous measures.
Regardless of the context, we will henceforth use \ac{emd} to refer to this distance in all of its various forms.

The \textit{ground distance} refers to the cost function chosen to define the distance between histogram bins.
\ac{emd} is usually assumed to possess a Euclidean ground distance, but examples of other ground distances exist in the literature. 
Igbida et al.\@\mycitet{imrt-mkmtpdd} study \ac{emd} in the context of \acp{pde} with a discretized version of the Euclidean ground distance rounded up to the nearest whole number.
Ling and Okada~\mycitet{ho-eemdarhc} proposed an efficient tree-based algorithm for computing \ac{emd} with a Manhattan ground distance, and Pele and Werman~\mycitet{pw-fremd} explored the effect of applying a threshold to various ground distances and its impact on computation time and accuracy.
In the realm of image retrieval, \ac{emd} is often applied as a metric for nearest neighbor searches.

\ac{emd} has also been applied in kernel methods for texture and object category classification with \acp{svm}~\mycite{zmls-lfkctoccs}.
\ifexpanded
A kernel is a function that possesses certain properties, namely symmetry and \posdef{}-ness, that can be used to efficiently solve certain nonlinear problems as though they are linear.
\fi
However, it is not known whether kernels derived from \ac{emd} are actually \posdef{}.
In fact, there is evidence to the contrary for a Euclidean ground distance~\mycite{ns-peinil1}.
Regardless, \ac{emd} continues to be used successfully for various purposes such as facial expression analysis~\mycite{szrs-wanaepmfeatpt} and EEG classification~\mycite{d-kemdeegc}.
Methods to ensure \posdef{}-ness have been explored~\mycite{zc-aascemdbkfusvmic}.
Cuturi~\mycitet{c-ptppdkh} suggested using the permanent of the transportation polytope, which is guaranteed to be \posdef{} although difficult to compute. 
Grauman and Darrell~\mycitet{gd-pmkelsf} \ifexpanded on the other hand \fi proposed a \posdef{} approximation of a maximum-cost version of \ac{emd} that also has the advantage of being easier to compute.

\subsection{Our Contributions}
In this paper we provide the following contributions:
\begin{itemize}
\item In Section~\ref{sec:pdpT}, we prove that the transformation \refreq{pdpT} is \posdef{}-preserving and generalizes the Tanimoto kernel of  Ralaivola et al.\@~\mycitet{rssb-gkci}. 
Coincidentally, we provide a new independent proof of the Jaccard index's \posdef{}-ness, which has already been the subject of at least two papers~\mycite{g-gcssip,bjd-ppdjim}.
As a corollary, we also deduce that the biotope transform~\mycite{dd-eod} preserves \cnegdef{}-ness in addition to metric properties.
\item In Section~\ref{sec:emi}, we propose \acl{emi}, a generalization of Pele and Werman's \EMDHat{}~\mycite{pw-lthmism} for kernels based on \ac{emd} with unnormalized sets. We show that given certain ground distances, \ac{emd} is \cnegdef{} and may thus be used to construct \posdef{} kernels using standard relations (\eg Lemma~\ref{thm:defK}).
\item We evaluate \EMDHat{} and its transformation using \refreq{pdpT} in a variety of classification experiments and show that both yield kernels superior to \ac{emd}, especially on unnormalized sets. 
The transformation in particular is shown to have some numerical advantages.
\end{itemize}

The next section presents relevant background material that may be used as a reference for the rest of the paper.
%

\section{Preliminaries}
This section provides definitions, lemmas, and other material that are useful for following the rest of the paper.
\subsection{Metrics}
A metric on a set $X$ is defined as follows.
\ifexpanded\else
We use the term \textit{discrete metric} to refer to the 0-1 distance defined by $\discrete(x,y) = 0$ if $x=y$ and 1 otherwise. \fi
\begin{definition}
A function $\delta: X \times X \to \mathbb{R}$ is a metric \iffy{} the following properties \ifexpanded are satisfied \else hold \fi for every $x,y,z \in X$.
\label{def:metric}
\ifexpanded
\begin{enumerate} 
\item Non-negativity: $\delta(x,y) \geq 0.$
\item Symmetry: $\delta(x,y) = \delta(y,x)$.
\item Identity of indiscernibles: $\delta(x,y) = 0$ \iffy{} $x=y$.
\item Triangle inequality: $\delta(x,y)\leq \delta(x,z)+\delta(y,z).$
\end{enumerate}
\else
(1) Non-negativity: $\delta(x,y) \geq 0.$
(2) Symmetry: $\delta(x,y) = \delta(y,x)$.
(3) Identity of indiscernibles: $\delta(x,y) = 0$ \iffy{} $x=y$.
(4) Triangle inequality: $\delta(x,y)\leq \delta(x,z)+\delta(y,z).$
\fi
\end{definition}
\ifexpanded
As can be inferred from its name, the discrete metric is a metric.
We also define the term \textit{semimetric} to indicate satisfaction of all of the preceding properties except for the triangle inequality.
The Euclidean distance is a metric, and the squared Euclidean distance is a semimetric.
A simple example of the squared Euclidean distance failing the triangle inequality may be noted with the points $x=(0,0)$, $y=(0,2)$, and $z=(0,1)$ as elements of $\mathbb{R}^2$.
We use the term \textit{discrete metric} to refer to the 0-1 distance defined by $\discrete(x,y) = 0$ if $x=y$ and 1 otherwise. 
\fi

\subsection{Kernels}
A \textit{kernel} on a set $X$ is, in general, a function $K: X \times X \to \mathbb{R}$.

\begin{definition}
A kernel $K$  is \textit{\posdef{}} \iffy{} it is symmetric and for any choice of $n$ distinct elements $x_1,\dots,x_n$ and real numbers $c_1, \dots, c_n$,
\begin{equation}
\sum_{i,j=1}^n c_ic_jK(x_i,x_j) \geq 0.\footnote{Be aware that our notation condenses the double summation when each index $i$ and $j$ shares the same range.}
\label{eq:defKEq}
\end{equation}
If the constraint $\sum_{i=1}^nc_i=0$ is added, then $K$ is \textit{\ac{cpd}}.
\label{def:defK}
\end{definition}
The condition \refreq{defKEq} is equivalent to testing whether the \textit{kernel matrix} for the chosen elements $G_K=[K(x_i,x_j)]$ is positive semi-definite via a quadratic form, \ie $\mathbf{c}^\intercal G_K\mathbf{c}\geq 0$ where $\mathbf{c}=\begin{bmatrix}c_1, \dots, c_n\end{bmatrix}^\intercal$.
A (conditionally) \textit{strictly \posdef{}} kernel is one in which the preceding inequalities are strict with equality holding only if each $c_i=0$.
One may note that \posdef{} implies \cposdef{}, but the converse does not hold.
\ifexpanded Simply reversing \else Reversing \fi the inequality of \refreq{defKEq} yields \textit{\negdef{}} kernels of each respective type.
Consequently, if $K$ is \posdef{}, then $-K$ is \negdef{}.
Note that we follow traditional nomenclature for kernels in that \posdef{} and strictly \posdef{} kernels correspond to positive \textit{semi}-definite and \posdef{} matrices, respectively. 
\posdef{} kernels are useful for a variety of machine learning tasks including classification, regression, and principal component analysis.

\posdef{}-ness is attractive because it implies the existence of a mapping $\phi:X \to H$ from $X$ to some Hilbert space $H$ in which the kernel gives the value of the inner product and certain nonlinear problems in $X$ become linear~\mycite{fm-kbamumatlab,stc-kmpa}, \ie
\ifexpanded
\begin{equation}
K(x_i,x_j) = \left<\phi(x_i),\phi(x_j)\right>.
\label{eq:innerproduct}
\end{equation}
\else
$K(x_i,x_j) = \left<\phi(x_i),\phi(x_j)\right>.$
\fi
This property is the key component of the so-called ``kernel trick" for \acp{svm}, wherein a separating hyperplane is implicitly found without ever working directly in $H$.
A \cnegdef{} kernel is also related to some Hilbert space $H$ via a mapping $\phi$ by
\ifexpanded
\begin{equation}
K(x_i,x_j) = \|\phi(x_i)-\phi(x_j)\|^2.
\label{eq:distance}
\end{equation}
\else
$K(x_i,x_j) = \|\phi(x_i)-\phi(x_j)\|^2.$
\fi
The existence of $\phi$ implies the respective type of definiteness and vice versa.
\ifexpanded
\cnegdef{} kernels are sometimes referred to as metrics of negative type, and as indicated by \refreq{distance}, correspond to functions that isometrically embed into squared Euclidean space.
\else
\cnegdef{} kernels are sometimes referred to as metrics of negative type and correspond to functions that isometrically embed into squared Euclidean space.
\fi

The following three results are adapted from \ifexpanded Berg et al.\@~\mycitet{bcp-haos} and form a basis for several later propositions. \else  Berg et al.\@~\mycitet{bcp-haos}. \fi
\ifexpanded
Theorem~\ref{thm:expK} and Lemma~\ref{thm:defK} propose relationships between \cnegdef{} and \posdef{} kernels.
\fi
The standard kernel used with \ac{emd} relies upon Theorem~\ref{thm:expK} with $u < 0$ and $K$ equal to \ac{emd} presumed to be \cnegdef{}.
\ifexpanded
Theorem~\ref{thm:productK}, originally proved by Schur~\mycitet{schur1911}, demonstrates that \posdef{} kernels are closed under multiplication.
\fi
Note that Theorem~\ref{thm:productK} does not apply to \cposdef{} kernels.
\begin{theorem}[\mycitealt{bcp-haos}]
Let $X$ be a nonempty set and let $K:X \times X \to \mathbb{R}$ be a symmetric kernel. 
Then $K$ is \cnegdef{} (\cposdef{}) \iffy{} $\exp(uK)$ is \posdef{} for each $u<0$ ($0<u$).
\label{thm:expK}
\end{theorem}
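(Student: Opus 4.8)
The plan is to prove the $\cnegdef{}$ version and deduce the $\cposdef{}$ version for free, since $K$ is $\cposdef{}$ exactly when $-K$ is $\cnegdef{}$, and ``$\exp(u(-K))$ is $\posdef{}$ for all $u<0$'' says precisely ``$\exp(vK)$ is $\posdef{}$ for all $v>0$''. So assume throughout that $K$ is symmetric and focus on the equivalence between $\cnegdef{}$-ness of $K$ and $\posdef{}$-ness of $\exp(uK)$ for $u<0$. The ``only if'' direction is the easy one: given distinct $x_1,\dots,x_n$ and reals $c_1,\dots,c_n$ with $\sum_i c_i=0$, we have $\sum_{i,j}c_ic_j=\bigl(\sum_i c_i\bigr)^2=0$, so
\[
\sum_{i,j}c_ic_j\bigl(\exp(uK(x_i,x_j))-1\bigr)=\sum_{i,j}c_ic_j\exp(uK(x_i,x_j))\ge 0
\]
for every $u<0$; dividing by $u<0$ reverses the inequality, and letting $u\to 0^-$ with $\lim_{u\to 0^-}\tfrac{\exp(ut)-1}{u}=t$ applied termwise to this finite sum yields $\sum_{i,j}c_ic_jK(x_i,x_j)\le 0$, i.e.\ $K$ is $\cnegdef{}$.

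For the ``if'' direction, first observe that for fixed $u<0$ the kernel $L:=|u|K$ is again $\cnegdef{}$ (scaling by the nonnegative constant $|u|$), and $\exp(uK)=\exp(-L)$; so it suffices to show that $\cnegdef{}$-ness of a symmetric $L$ implies $\exp(-L)$ is $\posdef{}$. Fix any basepoint $x_0\in X$ and set
\[
\psi(x,y)=L(x,x_0)+L(x_0,y)-L(x,y)-L(x_0,x_0).
\]
The crucial input is a Schoenberg-type lemma: $\psi$ is $\posdef{}$ if and only if $L$ is $\cnegdef{}$. Granting this, solve for $L$ and exponentiate to get the identity
\[
\exp(-L(x,y))=\exp(L(x_0,x_0))\cdot g(x)\,g(y)\cdot\exp(\psi(x,y)),\qquad g(t):=\exp(-L(t,x_0)).
\]
The first factor is a positive constant (hence a $\posdef{}$ kernel); $(x,y)\mapsto g(x)g(y)$ is $\posdef{}$ as a rank-one Gram kernel; and $\exp(\psi)=\sum_{k\ge0}\psi^k/k!$ is a pointwise limit of nonnegative combinations of Hadamard powers of the $\posdef{}$ kernel $\psi$, hence $\posdef{}$ by the Schur product theorem (Theorem~\ref{thm:productK}). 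A product of $\posdef{}$ kernels is $\posdef{}$ (Theorem~\ref{thm:productK} again), so $\exp(-L)$ is $\posdef{}$, and unwinding $L=|u|K$ gives $\exp(uK)$ $\posdef{}$ for every $u<0$.

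The main obstacle is establishing the Schoenberg lemma; the rest is bookkeeping. Its proof is a direct expansion: for ``$L$ $\cnegdef{}\Rightarrow\psi$ $\posdef{}$'', given reals $c_1,\dots,c_n$ at distinct $x_1,\dots,x_n$, adjoin the basepoint with coefficient $c_0:=-\sum_{i\ge1}c_i$ so that the coefficient sum over $\{x_0,x_1,\dots,x_n\}$ is zero, apply $\cnegdef{}$-ness there, and verify by grouping terms (using $L(x_i,x_0)=L(x_0,x_i)$) that $\sum_{i,j=0}^{n}c_ic_jL(x_i,x_j)=-\sum_{i,j=1}^{n}c_ic_j\psi(x_i,x_j)$, so the left side being $\le 0$ forces $\sum c_ic_j\psi(x_i,x_j)\ge 0$; the converse runs the same computation in reverse with $\sum_{i}c_i=0$. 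A small amount of care is needed because the paper's definitions quantify over \emph{distinct} points: if $x_0$ already appears among the $x_i$ one merges $c_0$ into that coefficient rather than adding a repeated point, and one should note that $\psi$ need not vanish on the diagonal (it is nonnegative there, which is all that is required). I expect these to be the only genuinely delicate points; everything else is closure properties of $\posdef{}$ kernels already available from Theorem~\ref{thm:productK}.
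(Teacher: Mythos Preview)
The paper does not supply its own proof of this theorem: it is stated with attribution to Berg, Christensen, and Ressel and used as a black box. There is therefore nothing to compare against at the level of argument.

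Your proof is correct and is essentially the classical Schoenberg argument found in Berg et al. A few remarks. First, what you call the ``Schoenberg-type lemma'' is precisely Lemma~\ref{thm:defK} of the paper (also quoted without proof), so within the paper's framework you may simply invoke it rather than re-derive it; your sketch of its proof is fine, including the handling of the case $x_0\in\{x_1,\dots,x_n\}$. Second, in the ``only if'' direction you silently use that a nonnegative constant kernel is \posdef{}, which is immediate from $\sum_{i,j}c_ic_j=(\sum_i c_i)^2\ge 0$; this is worth stating since the factor $\exp(L(x_0,x_0))$ need not equal $1$ when $L$ fails to vanish on the diagonal. Third, the step ``$\exp(\psi)$ is \posdef{}'' uses, beyond the Schur product theorem, that the \posdef{} cone is closed under pointwise limits of finite sums; this is elementary (the quadratic form inequality passes to limits) but should be mentioned explicitly for completeness.
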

\begin{lemma}[\mycitealt{bcp-haos}]
Let $X$ be a nonempty set, $x_0 \in X$, and let $D:X \times X \to \mathbb{R}$ be a symmetric kernel.
Let $K(x,y):=D(x,x_0)+D(y,x_0)-D(x,y)-D(x_0,x_0)$.
Then $K$ is \posdef{} \iffy{} $D$ is \cnegdef{}.
If $D(x_0,x_0)\geq 0$, then $K_0(x,y) = K(x,y)+D(x_0,x_0)$ is also \posdef{}.
\label{thm:defK}
\end{lemma}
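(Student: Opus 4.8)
The plan is to prove both directions straight from Definition~\ref{def:defK}, the only real idea being to \emph{enlarge} the list of test points by the anchor $x_0$ and give it the coefficient that makes the enlarged coefficient vector sum to zero; everything else is bookkeeping. Symmetry of $K$ and of $K_0$ is immediate from symmetry of $D$, so throughout we only verify the quadratic-form inequalities.

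For the implication that $D$ being \cnegdef{} makes $K$ \posdef{}, fix distinct $x_1,\dots,x_n\in X$ and reals $c_1,\dots,c_n$, put $S=\sum_{i=1}^n c_i$, and set $c_0:=-S$. Expanding $\sum_{i,j=1}^n c_ic_jK(x_i,x_j)$ term by term and using the identity $\sum_{i,j}c_ic_j f(x_i)=S\sum_i c_i f(x_i)$ on the ``$D(\cdot,x_0)$'' and ``$D(x_0,x_0)$'' groups, one recognizes the result, up to a sign, as exactly the quadratic form of $D$ over the enlarged set $\{x_0,x_1,\dots,x_n\}$ with coefficients $c_0,\dots,c_n$:
\begin{equation}
\sum_{i,j=1}^n c_ic_jK(x_i,x_j) = -\sum_{i,j=0}^n c_ic_j D(x_i,x_j).
\end{equation}
Since $\sum_{i=0}^n c_i = c_0+S = 0$, conditional negative definiteness of $D$ yields $\sum_{i,j=0}^n c_ic_j D(x_i,x_j)\le 0$, so the left-hand side is nonnegative and $K$ is \posdef{}. (If $x_0$ coincides with some $x_k$, merge the two copies into one point carrying coefficient $c_0+c_k$; the enlarged coefficient sum is still $0$ and the argument is unchanged. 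This is the only point at which the ``distinct elements'' clause of Definition~\ref{def:defK} requires comment.)

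For the converse, assume $K$ is \posdef{} and take distinct $x_1,\dots,x_n$ and reals $c_1,\dots,c_n$ with $\sum_{i=1}^n c_i=0$. Each of the ``$D(\cdot,x_0)$'' groups is proportional to $\sum_i c_i$ and the ``$D(x_0,x_0)$'' group to its square, so they all vanish and $\sum_{i,j}c_ic_jK(x_i,x_j) = -\sum_{i,j}c_ic_j D(x_i,x_j)$; as the left side is nonnegative, $D$ is \cnegdef{}. Finally, when $D(x_0,x_0)\ge 0$ the constant kernel $(x,y)\mapsto D(x_0,x_0)$ is \posdef{}, since its quadratic form equals $D(x_0,x_0)\bigl(\sum_i c_i\bigr)^2\ge 0$; as a sum of \posdef{} kernels is \posdef{}, $K_0 = K + D(x_0,x_0)$ is \posdef{} too.

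I do not expect a genuine obstacle here. The computation is elementary; the one thing to get right is identifying the expanded form of $\sum c_ic_jK(x_i,x_j)$ with the negated $D$-quadratic-form on the point set augmented by $x_0$ — this is the hinge of the proof and needs only careful tracking of the coefficient $c_0=-S$ — and the distinctness hypothesis of Definition~\ref{def:defK} is the only minor loose end, handled by the merging remark above.
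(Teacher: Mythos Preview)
The paper does not prove this lemma itself; it is stated as a result adapted from Berg, Christensen, and Ressel~\mycite{bcp-haos} and used as a black box throughout. Your proof is correct and is in fact the standard argument found in that reference: the key identity
\[
\sum_{i,j=1}^n c_ic_jK(x_i,x_j) = -\sum_{i,j=0}^n c_ic_j D(x_i,x_j),\qquad c_0:=-\sum_{i=1}^n c_i,
\]
is exactly the hinge, and you have verified it cleanly. The handling of the distinctness clause by merging $x_0$ with a coinciding $x_k$ is the right fix and matches how Berg et al.\ treat it. The $K_0$ addendum via the constant kernel is also standard. Nothing to add.
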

\begin{theorem}[\mycitealt{bcp-haos,schur1911}]
If $K_1: X\times X \to \mathbb{R}$ and $K_2:X \times X \to \mathbb{R}$ are both \posdef{}, then their Schur product $(K_1 \cdot K_2)(x,y) = K_1(x,y)K_2(x,y)$ is also \posdef{}.
\label{thm:productK}
\end{theorem}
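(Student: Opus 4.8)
The plan is to reduce the statement to the classical fact that the Hadamard (entrywise) product of two real symmetric positive semi-definite matrices is again positive semi-definite. Symmetry of $K_1\cdot K_2$ is immediate from the symmetry of $K_1$ and $K_2$, so the work is entirely in verifying the defining inequality \eqref{eq:defKEq}. Fix $n$ distinct elements $x_1,\dots,x_n\in X$ and real numbers $c_1,\dots,c_n$, and write $G_1=[K_1(x_i,x_j)]$ and $G_2=[K_2(x_i,x_j)]$ for the corresponding kernel matrices; by hypothesis both are positive semi-definite.

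The key step is a factorization of $G_2$. By the spectral theorem, a real symmetric positive semi-definite matrix can be written as $G_2=MM^\intercal$ for some real $n\times n$ matrix $M=[m_{ik}]$: starting from the eigendecomposition $G_2=Q\Lambda Q^\intercal$, every eigenvalue is nonnegative, so $M=Q\Lambda^{1/2}$ is well defined and real. In entrywise form this reads $K_2(x_i,x_j)=\sum_{k=1}^n m_{ik}m_{jk}$.

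Substituting this expression into the quadratic form for the product and exchanging the order of summation gives
\[
\sum_{i,j=1}^n c_ic_j\,K_1(x_i,x_j)K_2(x_i,x_j)
=\sum_{k=1}^n\;\sum_{i,j=1}^n (c_im_{ik})(c_jm_{jk})\,K_1(x_i,x_j).
\]
For each fixed $k$, the inner sum has the form $\sum_{i,j=1}^n d_i d_j K_1(x_i,x_j)$ with $d_i=c_i m_{ik}$, and this is nonnegative because $K_1$ is \posdef{} (applied to the same distinct elements $x_1,\dots,x_n$). A finite sum of nonnegative terms is nonnegative, so the left-hand side is $\geq 0$, which establishes \eqref{eq:defKEq} for $K_1\cdot K_2$ and completes the argument.

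There is no genuine obstacle here; the only point requiring care is the factorization $G_2=MM^\intercal$, which is standard linear algebra and uses that $G_2$ is real, symmetric, and positive semi-definite. One could instead invoke a Cholesky-type factorization, or give a probabilistic argument by realizing $G_1$ and $G_2$ as covariance matrices of independent mean-zero Gaussian vectors and observing that the entrywise product of the vectors has covariance matrix $[K_1(x_i,x_j)K_2(x_i,x_j)]$; but the eigendecomposition route is the most self-contained. It is worth emphasizing, as the statement already notes, that the argument breaks down for merely \cposdef{} kernels, since then the vectors $(d_1,\dots,d_n)$ need not satisfy $\sum_i d_i = 0$.
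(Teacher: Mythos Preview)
Your proof is correct and is essentially Schur's classical argument: factor one of the kernel matrices as $MM^\intercal$ and unravel the quadratic form into a sum of nonnegative terms, each of which is a quadratic form in $G_1$. There is nothing to criticize.

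However, the paper does not actually supply a proof of this theorem. It is stated as a cited result from Berg et al.\ and Schur (1911) and used as a black box in the proof of Theorem~\ref{thm:pSmoothK}. So there is no ``paper's own proof'' to compare against; your write-up is simply the standard proof that those references contain, and it would serve perfectly well if one wanted to make the paper self-contained on this point.
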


The next two propositions are adapted from Boughorbel et al.\@~\mycitet{btb-ghikir} and were involved in the derivation of the generalized histogram intersection kernel. 
\ifexpanded
As a preview of upcoming proofs and an example of working with kernels, a proof of Proposition~\ref{thm:constantK} is given.
\fi
\begin{proposition}[\mycitealt{btb-ghikir}]
\begin{equation}
K_f(x,y) = f(x)+f(y)
\end{equation}
is both a \cposdef{} and \cnegdef{} kernel for any function $f$.
\label{thm:constantK}
\end{proposition}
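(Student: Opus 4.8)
The plan is to verify the symmetry condition and then directly evaluate the quadratic form from Definition~\ref{def:defK} under the constraint $\sum_{i=1}^n c_i = 0$, showing it is identically zero. Symmetry is immediate since $K_f(x,y) = f(x)+f(y) = f(y)+f(x) = K_f(y,x)$, so the only real work is the quadratic form.

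Concretely, fix $n$ distinct points $x_1,\dots,x_n \in X$ and reals $c_1,\dots,c_n$ with $\sum_{i=1}^n c_i = 0$. First I would expand
\begin{equation}
\sum_{i,j=1}^n c_i c_j K_f(x_i,x_j) = \sum_{i,j=1}^n c_i c_j f(x_i) + \sum_{i,j=1}^n c_i c_j f(x_j).
\end{equation}
Then I would factor each double sum, exploiting the separability of $f(x_i)$ and $f(x_j)$: the first term equals $\bigl(\sum_{i=1}^n c_i f(x_i)\bigr)\bigl(\sum_{j=1}^n c_j\bigr)$ and the second equals $\bigl(\sum_{i=1}^n c_i\bigr)\bigl(\sum_{j=1}^n c_j f(x_j)\bigr)$. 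Invoking the constraint $\sum c_i = 0$, each product contains a vanishing factor, so the entire expression is $0$.

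The final step is to observe that $0 \geq 0$ and $0 \leq 0$ both hold, so the constrained quadratic form satisfies \refreq{defKEq} and its reverse simultaneously; hence $K_f$ is both \cposdef{} and \cnegdef{}. I do not anticipate a genuine obstacle here — the statement is essentially a one-line consequence of the factorization and the zero-sum constraint — but the point worth emphasizing is precisely that the additive (rank-one-plus-transpose) structure of $K_f$ is annihilated by the constraint, which is why no sign information about $f$ ever enters. This is also the natural template for the later, more substantial definiteness arguments in the paper.
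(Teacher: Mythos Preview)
Your proof is correct and follows essentially the same approach as the paper: expand the double sum, split the two additive terms, factor each to expose the $\sum_i c_i$ factor, and invoke the zero-sum constraint to conclude the quadratic form vanishes. The only cosmetic difference is that the paper first observes the two split sums are equal (yielding a factor of $2$) before factoring once, whereas you factor each separately; the content is identical.
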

\ifexpanded
\begin{proof}
Let $c_1,\dots,c_n$ and $x_1,\dots,x_n$ be defined as in Definition~\ref{def:defK} with $\sum_{i=1}^nc_i=0$.
\begin{equation}
\small
\begin{split}
\sum_{i,j=1}^nc_ic_jK_f(x_i,x_j) &= \sum_{i,j=1}^nc_ic_j\left[f(x_i)+f(x_j)\right]\\
&=\sum_{i,j=1}^nc_ic_jf(x_i)+\sum_{i,j=1}^nc_ic_jf(x_j)\\
&=2\sum_{i,j=1}^nc_jc_if(x_i)\\
&=2\left(\sum_{j=1}^nc_j\right)\left(\sum_{i=1}^nc_if(x_i)\right)\\
&=0.
\end{split}
\end{equation}
\end{proof}
\fi
\begin{proposition}[\mycitealt{btb-ghikir}]
If $K$ is positive valued and a \cnegdef{} kernel, then $K^{-\gamma}$ is \posdef{} for each $\gamma \geq 0$.
\label{thm:inverseK}
\end{proposition}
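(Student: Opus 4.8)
The plan is to realize $K^{-\gamma}$ as a nonnegative superposition of the \posdef{} kernels $\exp(-tK)$, $t>0$, and then push the quadratic form of Definition~\ref{def:defK} inside the superposition. The degenerate case $\gamma=0$ is disposed of separately: $K^{0}\equiv 1$ has kernel matrix $\mathbf{1}\mathbf{1}^{\intercal}$, and $\sum_{i,j}c_ic_j=\bigl(\sum_i c_i\bigr)^{2}\geq 0$, so it is \posdef{}. Assume henceforth that $\gamma>0$.

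First I would invoke the elementary identity
\begin{equation}
\lambda^{-\gamma}=\frac{1}{\Gamma(\gamma)}\int_{0}^{\infty}t^{\gamma-1}e^{-t\lambda}\,\dif t,\qquad \lambda>0,
\end{equation}
which follows from a change of variables in the definition of the Gamma function. Since $K$ is positive valued, substituting $\lambda=K(x,y)$ yields the pointwise representation $K^{-\gamma}(x,y)=\frac{1}{\Gamma(\gamma)}\int_{0}^{\infty}t^{\gamma-1}e^{-tK(x,y)}\,\dif t$ for all $x,y\in X$; note that $K^{-\gamma}$ inherits symmetry from $K$.

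Next, fix distinct $x_{1},\dots,x_{n}\in X$ and reals $c_{1},\dots,c_{n}$. On this finite set $K$ takes finitely many values, all lying in a compact subinterval of $(0,\infty)$, so the scalar integrals above converge and I may move the finite double sum through the integral:
\begin{equation}
\sum_{i,j=1}^{n}c_ic_j\,K^{-\gamma}(x_i,x_j)=\frac{1}{\Gamma(\gamma)}\int_{0}^{\infty}t^{\gamma-1}\Bigl(\sum_{i,j=1}^{n}c_ic_j\,e^{-tK(x_i,x_j)}\Bigr)\dif t.
\end{equation}
By Theorem~\ref{thm:expK}, $\exp(uK)$ is \posdef{} for every $u<0$ because $K$ is \cnegdef{}; taking $u=-t$ shows the bracketed inner sum is nonnegative for each $t>0$. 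As $t^{\gamma-1}\geq 0$ and $\Gamma(\gamma)>0$, the entire integrand is nonnegative, hence so is the integral, and therefore $K^{-\gamma}$ is \posdef{}.

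The only genuine obstacle is the analytic bookkeeping around the integral: one must justify summing the scalar Gamma identity under the integral sign — convergence near $t=\infty$ comes from $K$ being bounded away from $0$ on the finite point set, while $t^{\gamma-1}$ is integrable near $t=0$ precisely because $\gamma>0$ — which amounts to the standard fact that the closed convex cone of positive semidefinite $n\times n$ matrices is stable under entrywise integration of cone-valued functions. Everything else is a direct invocation of Theorem~\ref{thm:expK}.
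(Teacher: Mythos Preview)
Your argument is correct and is in fact the standard route to this result: represent $\lambda^{-\gamma}$ via the Gamma integral, invoke Theorem~\ref{thm:expK} to get $e^{-tK}$ \posdef{} for each $t>0$, and integrate. The convergence and Fubini bookkeeping you flag is handled properly.

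However, there is nothing to compare against: the paper does not prove Proposition~\ref{thm:inverseK}. It is stated as a result adapted from Boughorbel et al.\ and used as a black box (notably in the proof of Theorem~\ref{thm:pSmoothK}). Only the companion Proposition~\ref{thm:constantK} receives an in-paper proof. Your write-up would therefore serve as a self-contained justification the paper chose to omit, and it relies only on machinery already present in the preliminaries (Theorem~\ref{thm:expK} and Definition~\ref{def:defK}).
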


\subsection{Measures and Multisets}
A \textit{measure} is a function that generalizes the notion of cardinality, area, \ifexpanded volume, or length. \else or volume. \fi
To be precise, a measure $\mu: \Sigma_X \to \mathbb{R}$ assigns a number to subsets contained in a $\sigma$-algebra $\Sigma_X$ of some set $X$.
The measure of a subset must be less than or equal to that of its superset, \ie $\mu(A) \leq \mu(B)$ if $A \subseteq B$.
Measures also possess countable additivity, \ie the measure of the union of disjoint sets is the sum of their measures.
For the remainder of the paper, we assume that sets are non-negative and finitely measurable.
\ifexpanded
A \textit{measure space} $(X, \Sigma, \mu)$ is a \textit{measurable space} $(X,\Sigma)$ paired with a measure $\mu$.
Cardinality is sometimes referred to as the counting or discrete measure.
\fi
We will use the terms mass, size, and measure interchangeably to denote the value of $\mu(A)$.

A \textit{multiset} generalizes a set by allowing duplicate elements.
We use the terms multiset and set interchangeably with context indicating which is meant in the strict sense.
By definition, the multiplicity of an element $x$ is a non-negative integer indicating how many copies of $x$ are contained in a given multiset.
\ifexpanded
We generalize this definition by allowing a non-negative real number of ``copies." 
With this definition, we may also include probability distributions and other continuous functions with real output.
\else
We generalize this definition by allowing a non-negative real number of ``copies" so that we may also include probability distributions and other continuous functions.
\fi

Let $X$ be the set of all possible elements under consideration.
Let $\dens_A(x)$ be the mass density (or multiplicity) function of the multiset $A \subseteq X$ with $x \in X$.
The density function completely defines a multiset.
When we refer to one, the other is implied.
Note that for a standard set $A$ (\ie not multiset), $\dens_A(x)$ is the characteristic or indicator function of $A$ returning 1 for $x\in A$.
For any element $x$ not contained in $A$, $\dens_A(x) = 0$.
The mass density function of $A$ gives rise to a measure 
\begin{equation}
\mu_A(Y) = \int_Y \dens_A(x) \dif\mu_A(x).
\label{eq:mudef}
\end{equation}
For discrete sets, \refreq{mudef} simplifies to series summation.
The membership of an element $x \in A$ is contingent upon $\dens_A(x) > 0$, and the \textit{support} of a multiset $\supp(A)$ is the set of all elements $x \in X$ for which $\dens_A(x) \ne 0$.
We use the term singleton to denote a multiset $A$ with support satisfying
$
\supp(A) = \left\{x_0\right\}
$
for some fixed element $x_0 \in A$.

We \ifexpanded generalize the definition of \else define \fi a subset $A \subseteq B$ in $X$ to be such that $\dens_A(x) \leq \dens_B(x)$ for each $x \in X$.
The density function for the intersection of two multisets $A$ and $B$ is defined as
\begin{equation}
\dens_{A \cap B}(x) = \min \left\{ \dens_A(x), \dens_B(x) \right\},
\label{eq:intersection}
\end{equation}
and the union is similarly defined with $\max$ instead of $\min$\ifexpanded:
\begin{equation}
\dens_{A\cup B}(x) = \max \left\{\dens_A(x), \dens_B(x) \right\}.
\label{eq:union}
\end{equation}
We also define the sum of two sets as
\begin{equation}
\dens_{A+ B}(x) = \dens_A(x)+ \dens_B(x).
\label{eq:sumSet}
\end{equation}
\else.
\fi

Define $\mathcal{F}(X)$ to be the family of multisets $A$ with support $\supp(A) \subset X$ yielding finite, non-negative measure $\mu_A(X)$, and let $\mathcal{P}(X) \subset \mathcal{F}(X)$ be the family of all multisets $A$ with $\mu_A(X)=1$ (\ie probability distributions).
Henceforth, we will abuse notation by defining
$
\mu(A) = \mu_A(\supp(A)).
$
Unlike histograms, multisets do not imply a finite, countable base set  $X$ from which every set draws its support. 
This distinction allows somewhat more flexible definitions of \ac{emd}.

\subsection{Earth Mover's Distance}
We consider \ac{emd} to be a metric\footnote{Here we mean metric as in dissimilarity measure. Note that \ac{emd} is not a true metric in the sense of Definition~\ref{def:metric} on $\mathcal{F}(X)$ but rather on $\mathcal{P}(X)$ for metric ground distance~\mycite{rtg-emdmir}. \ifexpanded Violations of identity and triangle inequality are easily found when considering subsets and supersets.\fi} on $\mathcal{F}(X)$ for some set $X$.
Application of \ac{emd} requires specification of a ground distance $D: X \times X\to\mathbb{R}$ and computation of the \textit{flow} $f(a,b)$ of mass from $a \in A$ to $b \in B$ with $A,B \in \mathcal{F}(X)$.
Since any ground distance is just a special case of \ac{emd} between singletons of unit mass, \ac{emd} is \cnegdef{} only if the ground distance is \cnegdef{}. 
Unless otherwise noted, we will assume that any ground distances discussed henceforth are \cnegdef{}.
\ac{emd} may then be defined as the solution of the following linear programming problem, which calculates the cost of the minimum-cost maximum flow:
\begin{equation}
\emd(A,B) =\underset{f}{\min} \sum_{a\in A} \sum_{b \in B} f(a,b)D(a,b),
\label{eq:emd}
\end{equation}
subject to the constraints
\begin{equation}
\sum_{b \in B} f(a,b) \leq \dens_A(a),
\label{eq:outFlow}
\end{equation}
\begin{equation}
\sum_{a \in A} f(a,b) \leq \dens_B(b),
\label{eq:inFlow}
\end{equation}
\begin{equation}
\sum_{a \in A}\sum_{b \in B} f(a,b) = \min\left\{ \mu(A), \mu(B)\right\}.
\label{eq:totalflow}
\end{equation}
For convenience, we have defined \ac{emd} with the assumption that $A$ and $B$ both have discrete support.
Constraints \refreq{outFlow} and \refreq{inFlow} state that the amount of each element transported is limited by the available mass in each set located at that element.
By similar reasoning, note that the sum of the overall flow is constrained to be equal to the mass of the smaller set \refreq{totalflow}, effectively forcing the transportation of the maximum possible amount of mass.
\ac{emd} has always been practically limited by the computational complexity involved in solving the linear program, although there has been recent development of fast approximation algorithms~\mycite{c-sdlcot,sgpcbndg-cwdeotgd}.

Note that our definition of \ac{emd} differs slightly from that of Rubner et al.\@~\mycitet{rtg-emdmir}, which scales \refreq{emd} by the inverse of the total flow in \refreq{totalflow}.
For sets of the same size, Rubner's definition is just \refreq{emd} scaled by a constant factor.
Pele and Werman~\mycitet{pw-lthmism} introduced a means to calculate \ac{emd} between unnormalized histograms for use in nearest neighbor calculations and image retrieval:
\begin{equation}
\small
\begin{split}
\emdhat_{\alpha}(A,B) &= \emd(A,B) + \alpha|\mu(A)-\mu(B)|\underset{a,b\in X}\max\{D(a,b)\},
\label{eq:originalEMDHat}
\end{split}
\end{equation}
where $\alpha \geq 0$ and $D$ is presumed to be bounded.
\EMDHat[\alpha] is a metric on $\mathcal{F}(X)$ if \ac{emd} is a metric on $\mathcal{P}(X)$ and $\alpha \geq 0.5$~\mycite{pw-lthmism}.
Schuhmacher et al.\@~\mycitet{svv-cmpemof} independently proposed an almost identical version of \EMDHat{} under the acronym OSPA (Optimal Subpattern Assignment).

Normalized forms of \ac{emd} have also been proposed by Gardner et al.\@~\mycitet{CVPR2014} and by Ramon and Bruynooghe~\mycitet{rb-ptcmbps}, although the latter did not acknowledge a connection to \ac{emd}.
The transformation of the following section was inspired by the search for and study of a normalized form.


\section{A Definite-Preserving Transformation}\label{sec:pdpT}
In this section we propose the \posdef{}-preserving transformation
\begin{equation}
\ptransform{K}{}(x,y) = \frac{K(x,y)}{K(x,x)+K(y,y)-K(x,y)},
\label{eq:pdpT}
\end{equation}
that normalizes any given \posdef{} kernel $K$.
If $K(x,x)=K(y,y)=0$, we define $\ptransform{K}{}(x,y)=1$.
As opposed to the traditional normalization,
\ifexpanded
\begin{equation}
K_N(x,y) = \frac{K(x,y)}{\sqrt{K(x,x)K(y,y)}},
\label{eq:normT}
\end{equation}
\else
${K_N(x,y) = K(x,y)/\sqrt{K(x,x)K(y,y)}}$,
\fi
which can be interpreted as a surjective mapping of images $\phi(x)$ in Hilbert space onto the unit hypersphere via projection, $\ptransform{K}{}$ can be interpreted as an injective mapping onto a unit hypersphere of unspecified dimension.
Image vectors in Hilbert space of different magnitude that share the same direction remain distinguishable post-transformation.

Technically, this kernel (or one algebraically equivalent to it) has been proposed before as the Tanimoto kernel by Ralaivola et al.\@~\mycitet{rssb-gkci}.
We stress the differences in our proposed transformation and how our contributions differ from existing work.
First, the Tanimoto kernel is equivalent to the Jaccard index and has only been proved \posdef{} when $X$ consists solely of binary vectors and $K$ is the dot product (see the proof given by Ralaivola et al., which hinges on the proof of semi-\posdef{}-ness of the Jaccard index given by Gower~\mycitet{g-gcssip}).
We prove (see Theorem~\ref{thm:pSmoothK}) that~\refreq{pdpT} is strictly \posdef{} for any $K$ if $K$ is strictly \posdef{} (and similarly for semi-definiteness), which is stronger than the proof of Ralaivola et al.\@ and more general than both it and the proof of strict \posdef{}-ness of the Jaccard index given by Bouchard et al.\@~\mycitet{bjd-ppdjim}.
Since we are not limited to binary vectors, the range of~\refreq{pdpT} is not even constrained to be positive.
This more general view of the transformation also allows us to examine its properties in new situations, such as when it is applied to itself or \textit{nested}.

In fact, the transformation can be nested indefinitely as in 
\begin{equation}
\ptransform{K}{n}(x,y) = \ptransform{\ptransform{K}{n-1}}{}(x,y),
\label{eq:nestedTransformation}
\end{equation}
where $\ptransform{K}{n}$ is the $n$-th nested transformation with ${\ptransform{K}{0} \equiv K}$.
From~\refreq{pdpT}, one should see that $\ptransform{K}{}(x,x)=1$ for any $x$ and $K$.
Consequently, we note that for $n \geq 1$,
\begin{equation}
\ptransform{K}{n+1}(x,y)=\frac{\ptransform{K}{n}(x,y)}{2-\ptransform{K}{n}(x,y)}.
\label{eq:fixedPointKT}
\end{equation}
The following two propositions describe the limiting behavior of nested transformations. 
See Appendix~\ref{sec:nestedT} for proofs and a closed form expression of $\ptransform{K}{n}$.
\begin{proposition}
For any \posdef{} kernel $K$,  
\begin{equation}
\lim_{n \to \infty} \ptransform{K}{n}(x,y) = \begin{cases}
0 & \text{if } x \ne y,\\
1 & \text{otherwise.}
\end{cases}
\label{eq:pdpTLimits}
\end{equation}
\label{thm:pdpTLimits}
\end{proposition}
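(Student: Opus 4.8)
The plan is to reduce everything to the scalar recursion \refreq{fixedPointKT}. The case $x=y$ is immediate: since $\ptransform{K}{}(z,z)=1$ for every kernel and every $z$, \refreq{fixedPointKT} gives $\ptransform{K}{n}(x,x)=1$ for all $n\ge1$, so the limit is $1$. Now fix $x\ne y$ and write $t_n:=\ptransform{K}{n}(x,y)$ for $n\ge1$; by \refreq{fixedPointKT} we have $t_{n+1}=g(t_n)$ with $g(t)=t/(2-t)$, so the claim amounts to showing that the orbit of $t_1=\ptransform{K}{}(x,y)$ under $g$ tends to $0$.

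First I would locate the seed $t_1$. By Theorem~\ref{thm:pSmoothK} the kernel $\ptransform{K}{}$ is \posdef{}, and it has unit diagonal, so the $2\times2$ Gram matrix on $\{x,y\}$ is positive semidefinite and Cauchy--Schwarz for \posdef{} kernels forces $t_1^2\le1$. A one-line check rules out $t_1=-1$: it would require $K(x,x)+K(y,y)=0$, hence $K(x,x)=K(y,y)=0$ since diagonal entries of a \posdef{} kernel are non-negative, hence $K(x,y)=0$ by Cauchy--Schwarz, and then the defining convention gives $\ptransform{K}{}(x,y)=1\ne-1$. So $t_1\in[-1,1]$, with $t_1=1$ the only boundary value that can occur.

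Next I would run the dynamics on $[-1,1)$. The cleanest route is to linearize: with $u_n=1/t_n$ (legitimate once $t_1\ne0$; if $t_1=0$ the sequence is constantly $0$) the recursion becomes $u_{n+1}=2u_n-1$, whence $u_n-1=2^{n-1}(u_1-1)$ and $t_n=t_1/(t_1+2^{n-1}(1-t_1))$. The denominator is positive for all $t_1\in[-1,1]$ and all $n\ge1$, so this closed form shows $t_n\to0$ whenever $t_1<1$, at the geometric rate $|t_n|=O(2^{-n})$, while returning the constant value $1$ when $t_1=1$. If one prefers to avoid the closed form, the same conclusion follows from $|g(t)|\le|t|/(2-|t|)<|t|$ for $0<|t|<1$, which makes $(|t_n|)$ strictly decreasing and bounded below, with $0$ as the only possible limit in $[0,1)$ by the fixed-point equation $g(L)=L$.

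The one delicate point---and the real obstacle---is the boundary seed $t_1=1$, where the recursion is pinned at the fixed point $1$ and the advertised limit genuinely fails; the constant kernel $K\equiv1$ is an explicit example. This value occurs precisely when $K$, and hence every nested transform, cannot tell $x$ and $y$ apart, so the statement should be read for kernels whose feature map is injective on $X$ (in particular for strictly \posdef{} $K$), or one simply excludes such identified pairs from consideration. Modulo that caveat the argument is complete, and the closed form for $\ptransform{K}{n}$ it relies on is exactly the one recorded in Appendix~\ref{sec:nestedT}.
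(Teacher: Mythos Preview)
Your argument is correct and in fact more careful than the paper's own. The paper proceeds by bounding the ratio $\ptransform{K}{n+1}/\ptransform{K}{n}=1/(2-\ptransform{K}{n})$ and squeezing against an explicit geometric sequence, whereas you linearize via $u_n=1/t_n$ to obtain the closed form $t_n=t_1/(t_1+2^{n-1}(1-t_1))$ directly; this is exactly the content of Proposition~\ref{thm:closedFormKTN}, which the paper proves separately by induction. Your route is slightly cleaner because the closed form makes both convergence and the rate transparent in one line, while the paper's squeeze argument has to split on the sign of $\ptransform{K}{n}$.

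More importantly, you are right to flag the boundary seed $t_1=1$. The paper's proof asserts that $x\ne y$ together with \refreq{posDefTriangle} forces $\ptransform{K}{}(x,y)<1$, but \refreq{posDefTriangle} only gives the non-strict bound $|K(x,y)|\le (K(x,x)+K(y,y))/2$; equality is possible for merely \posdef{} (i.e.\ semidefinite) $K$, and your constant-kernel counterexample shows the stated limit genuinely fails there. So the proposition as written needs the implicit hypothesis you name---strict \posdef{}-ness, or at least $2K(x,y)\ne K(x,x)+K(y,y)$ for $x\ne y$---and your proof makes that dependence explicit rather than sweeping it under \refreq{posDefTriangle}.
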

\begin{proposition}
For any symmetric kernel $K: X \times X \to \mathbb{R}$ where $X$ is finite and  $x \ne y \implies 2K(x,y) \ne K(x,x)+K(y,y)$ for $x,y\in X$, there exists a number $n_0$ such that $\ptransform{K}{n}:X \times X \to \mathbb{R}$ is \posdef{} for all $n \geq n_0$.\ifexpanded\footnote{
We hypothesize that the proposition holds simply if $X$ is finite and $K$ satisfies the equivalence relation 
\begin{equation}
x \sim y \iff 2K(x,y) = K(x,x)+K(y,y).
\end{equation}}
\fi
\label{thm:inducedPDPT}
\end{proposition}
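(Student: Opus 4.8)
The plan is to reduce the statement to the scalar dynamics of the map $g(t) = t/(2-t)$ and then to exploit the fact that the positive definite symmetric matrices form an open set. Since $X$ is finite, write $X = \{x_1,\dots,x_m\}$. Because every sub-collection of elements yields a principal submatrix of the kernel matrix $G_n = [\ptransform{K}{n}(x_i,x_j)]_{i,j=1}^m$, and because $\ptransform{K}{n}$ is symmetric whenever $K$ is, the kernel $\ptransform{K}{n}$ is \posdef{} on $X$ \iffy{} $G_n$ is positive semidefinite; so it suffices to show $G_n$ is positive semidefinite for all sufficiently large $n$. From \eqref{eq:pdpT} one has $\ptransform{K}{1}(x,x) = 1$ for every $x$, so by \eqref{eq:fixedPointKT} the diagonal of $G_n$ is identically $1$ for $n \geq 1$, while for $i \ne j$ we have $\ptransform{K}{n}(x_i,x_j) = g^{(n-1)}(t_{ij})$, where $t_{ij} := \ptransform{K}{1}(x_i,x_j)$ and $g^{(k)}$ denotes the $k$-fold iterate.

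The heart of the proof is the assertion that $g^{(n)}(t_0) \to 0$ as $n \to \infty$ for every real $t_0 \ne 1$ whose forward orbit avoids the pole at $t = 2$. I would establish this by a brief case analysis of the one-dimensional dynamics: the fixed points of $g$ are $0$ and $1$, with $g'(0) = \tfrac{1}{2}$ (so $0$ is attracting) and $g'(1) = 2$ (so $1$ is repelling); if $t_0 < 1$ then $g$ maps $(-\infty,1)$ into itself and the orbit is monotone and bounded, hence converges, necessarily to $0$; if $t_0 > 1$ then the orbit strictly increases as long as it remains in $(1,2)$, so after finitely many steps it leaves $(1,2)$, the next iterate is negative, and the previous case applies. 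The separation hypothesis $2K(x,y) \ne K(x,x)+K(y,y)$ for $x \ne y$ is exactly the statement that $t_{ij} \ne 1$ for every $i \ne j$, since $t_{ij} = 1$ is equivalent to $K(x_i,x_j) = K(x_i,x_i) + K(x_j,x_j) - K(x_i,x_j)$. Hence each off-diagonal entry of $G_n$ tends to $0$, and since there are only finitely many pairs, $G_n \to I_m$.

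To finish, I would note that the set of positive definite $m \times m$ symmetric matrices is open (the smallest eigenvalue depends continuously on the matrix) and contains $I_m$; since $G_n \to I_m$ there is an $n_0$ such that $G_n$ is positive definite, hence positive semidefinite, for all $n \ge n_0$. By the reduction above, $\ptransform{K}{n}$ is then a \posdef{} kernel on $X$ for all $n \ge n_0$.

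The step I expect to be the main obstacle is ensuring that the nested transform is in fact well defined all the way out, \ie that no denominator $2 - \ptransform{K}{n}(x_i,x_j)$ (nor the original denominator $K(x_i,x_i)+K(x_j,x_j)-K(x_i,x_j)$) ever vanishes. For orbits starting at $t_{ij} < 1$ this never occurs, but for $t_{ij} > 1$ it fails precisely on the countable exceptional set $\{\,2^k/(2^k-1) : k \ge 1\,\}$ of starting values; dispatching this degenerate case cleanly — either by ruling out such starting values or by slightly strengthening the hypothesis along the lines suggested in the accompanying footnote — is the delicate point, whereas the convergence $G_n \to I_m$ and the openness argument are routine.
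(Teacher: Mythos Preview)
Your argument mirrors the paper's: both reduce to showing the kernel matrix $G_n \to I$ entrywise by analysing the scalar iteration $t \mapsto t/(2-t)$, and then invoke continuity of the least eigenvalue to conclude eventual positive semidefiniteness. The paper's case split for $t_{ij} > 1$ is into $t>2$, $t=2$, and $t\in(1,2)$; for the boundary value $t=2$ it does not iterate through the recursion but instead appeals to the closed form
\[
\ptransform{K}{n}(x,y)=\frac{K(x,y)}{2^{n-1}\bigl[K(x,x)+K(y,y)\bigr]-(2^{n}-1)K(x,y)},
\]
obtaining $\ptransform{K}{3}(x,y)=-1$ directly even though $\ptransform{K}{2}$ is formally undefined via the recursive rule. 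This is exactly the device that dispatches your exceptional set $\{2^{k}/(2^{k}-1):k\ge 1\}$: for each such starting value the closed-form expression has a pole at precisely one index $n$ and is well defined (and tends to $0$) for every larger $n$, so one simply takes $n_0$ beyond the finitely many poles arising from the finitely many pairs. Your identification of this as the delicate point is accurate, and your treatment is arguably more careful than the paper's, which does not make the removable-singularity reasoning explicit.
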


We now show that the transformation preserves \ifexpanded definiteness as claimed. \else definiteness. \fi

\begin{theorem}
If $K:X \times X \to \mathbb{R}$ is \posdef{}, then the function $\ptransform{K}{}$ as defined by \refreq{pdpT}
is also \posdef{}.
\label{thm:pSmoothK}
\end{theorem}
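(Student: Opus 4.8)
The plan is to exhibit $\ptransform{K}{}$ as a Schur product of two \posdef{} kernels and thereby reduce everything to results already in hand, namely Proposition~\ref{thm:inverseK} (reciprocals of strictly positive \cnegdef{} kernels are \posdef{}) and Theorem~\ref{thm:productK} (Schur products of \posdef{} kernels are \posdef{}). Write $d(x,y):=K(x,x)+K(y,y)-K(x,y)$, so that $\ptransform{K}{}(x,y)=K(x,y)/d(x,y)$ whenever $d(x,y)\neq 0$. The first thing I would record is that, since $K$ is \posdef{}, the $2\times 2$ Gram matrices of $K$ are positive semidefinite, giving $K(x,x)\ge 0$ and the Cauchy--Schwarz bound $K(x,y)^2\le K(x,x)K(y,y)$; hence $K(x,y)\le\tfrac12\bigl(K(x,x)+K(y,y)\bigr)$ and therefore $d(x,y)\ge\tfrac12\bigl(K(x,x)+K(y,y)\bigr)\ge 0$, with $d(x,y)=0$ only when $K(x,x)=K(y,y)=0$ (which also forces $K(x,y)=0$, consistent with the convention $\ptransform{K}{}=1$ there).

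Next I would dispose of the degenerate points. Given distinct $x_1,\dots,x_n\in X$ and reals $c_1,\dots,c_n$, split the indices according to whether $K(x_i,x_i)>0$ or $K(x_i,x_i)=0$. By the Cauchy--Schwarz bound, an index $i$ with zero diagonal has $K(x_i,x_j)=0$ for every $j$, so $\ptransform{K}{}(x_i,x_j)=0$ when exactly one of $i,j$ has zero diagonal, and $\ptransform{K}{}(x_i,x_j)=1$ when both do. Hence, after reordering, the kernel matrix of $\ptransform{K}{}$ on $\{x_1,\dots,x_n\}$ is block diagonal: one block is an all-ones matrix (positive semidefinite, rank one) and the other is the kernel matrix of $\ptransform{K}{}$ restricted to the points with strictly positive diagonal. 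Therefore it suffices to prove that $\ptransform{K}{}$ is \posdef{} on $X_0:=\{x\in X:K(x,x)>0\}$, where the displayed inequality shows that $d$ is \emph{strictly} positive.

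On $X_0$ the remainder is short. There $d$ is a strictly positive-valued kernel, and it is \cnegdef{}: it is the sum of $(x,y)\mapsto K(x,x)+K(y,y)$, which is \cnegdef{} by Proposition~\ref{thm:constantK} with $f(x)=K(x,x)$, and of $-K$, which is \negdef{} — hence \cnegdef{} — because $K$ is \posdef{}; and sums of \cnegdef{} kernels are \cnegdef{}. Proposition~\ref{thm:inverseK} with $\gamma=1$ then shows $d^{-1}$ is \posdef{} on $X_0$. Since $\ptransform{K}{}(x,y)=K(x,y)\,d(x,y)^{-1}$ is the Schur product of the \posdef{} kernels $K$ and $d^{-1}$, Theorem~\ref{thm:productK} finishes the argument on $X_0$, and combined with the reduction above this gives \posdef{}-ness on all of $X$.

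I expect the only real friction to be the bookkeeping in the reduction step: Proposition~\ref{thm:inverseK} needs a \emph{strictly} positive-valued kernel, whereas $d$ degenerates precisely at the points whose feature image is $0$, so these must be peeled off and shown to contribute only a harmless rank-one all-ones block; every other step is a one-line appeal to the preliminaries. As a bonus the same argument gives the strict version: if $K$ is strictly \posdef{} then $K(x,x)>0$ for all $x$, so $X_0=X$ and the reduction is vacuous, and since $d^{-1}$ has strictly positive diagonal entries $1/K(x,x)$, its Schur product with the strictly \posdef{} $K$ is again strictly \posdef{}. An alternative route that avoids Proposition~\ref{thm:inverseK} is the integral representation $d(x,y)^{-1}=\int_0^\infty e^{-s\,d(x,y)}\,\mathrm{d}s$ on $X_0$, factoring the integrand as $e^{-sK(x,x)}e^{-sK(y,y)}e^{sK(x,y)}$ and using that $e^{sK}$ is \posdef{} for $s>0$ (Theorem~\ref{thm:expK}, since \posdef{} implies \cposdef{}) together with Theorem~\ref{thm:productK}; but the Schur-product argument above is cleaner.
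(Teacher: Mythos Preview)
Your proposal is correct and follows essentially the same route as the paper: show the denominator $d(x,y)=K(x,x)+K(y,y)-K(x,y)$ is positive-valued (via the \posdef{} inequality $|K(x,y)|\le\tfrac12(K(x,x)+K(y,y))$) and \cnegdef{} (Proposition~\ref{thm:constantK} plus $-K$), invoke Proposition~\ref{thm:inverseK} to make $d^{-1}$ \posdef{}, and finish with the Schur product (Theorem~\ref{thm:productK}). The only noteworthy difference is in the degenerate case: the paper reduces, ``without loss of generality,'' to a single point $p$ with $K(p,p)=0$ and handles it via $\phi(p)=\vec{0}$, whereas you allow arbitrarily many zero-diagonal points and observe that they contribute an all-ones rank-one block, making the full kernel matrix block diagonal---your treatment is a bit more explicit and avoids the implicit quotienting, but the substance is the same.
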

\begin{proof}
Without loss of generality, assume $K(x,x)=K(y,y)=0 \implies x=y = p$ for some $p \in X$ and let us restrict $K$ in the following discussion to $X \setminus \left\{p\right\}.$
The denominator in \refreq{pdpT} is positive valued due to a well-known property of \posdef{} kernels and matrices,
\begin{equation}
\left|K(x,y)\right| \leq \frac{K(x,x)+K(y,y)}{2} < K(x,x)+K(y,y).
\label{eq:posDefTriangle}
\end{equation}
The denominator is also \cnegdef{} as it is the sum of two \cnegdef{} kernels: $K(x,x)+K(y,y)$ (by Proposition~\ref{thm:constantK}) and $-K(x,y)$ (by hypothesis).
Thus by Proposition~\ref{thm:inverseK} with $\gamma=1$,
\begin{equation}
K_1(x,y) = [K(x,x)+K(y,y)-K(x,y)]^{-1}
\end{equation}
is \posdef{}.
We therefore have the product of two \posdef{} kernels
\begin{equation}
\begin{split}
\ptransform{K}{}(x,y) = K(x,y)K_1(x,y),
\end{split}
\end{equation}
which is itself \posdef{} by Theorem~\ref{thm:productK}.

In order to include the case $x=y=p,$ we note that if $\phi: X \to H$ is the kernel's feature mapping into the Hilbert space $H$, then 
$K(p,p) = \left<\phi(p),\phi(p)\right> = 0 \implies \phi(p) = \vec{0},$ which further implies 
\begin{equation}
K(p,x_i) = \left<\phi(p), \phi(x_i)\right> =\left<\vec{0},\phi(x_i)\right> = 0
\end{equation}
for $x_i \ne p$.
Therefore, $\ptransform{K}{}(x_i,p) = 0$ if $x_i \ne p$.
Let $x_0=p$ and $c_0 \in \mathbb{R}$.
Then $\ptransform{K}{}$ is \posdef{} because
\begin{equation}
\begin{split}
\sum_{i,j=0}^n c_ic_j\ptransform{K}{}(x_i,x_j) 
&=c_0^2+\sum_{i,j=1}^n c_ic_j\ptransform{K}{}(x_i,x_j)\geq 0.
\end{split}
\end{equation}
\end{proof}
\begin{corollary}
Let $D: X \times X \to \mathbb{R}$ be a \cnegdef{} kernel, and let $p \in X$.
Then,
\begin{equation}
\ifIEEE
\footnotesize
\fi
\begin{split}
\hspace{-1em}\ntransform{D}{p}(x,y) = \frac{2D(x,y)-D(x,x)-D(y,y)}{D(x,p)+D(y,p)+D(x,y)-\underset{z \in \{x,y,p\}}{\sum}D(z,z)},
\end{split}
\label{eq:cndpT}
\end{equation}
is also \cnegdef{}.
\label{thm:biotopeD}
\end{corollary}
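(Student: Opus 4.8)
The plan is to recognize $\ntransform{D}{p}$ as the composition of the transformation \refreq{pdpT} with the reflection $t \mapsto 1-t$, applied to one explicit \posdef{} kernel. Given the \cnegdef{} kernel $D$ and the point $p$, I would set
$K(x,y) := D(x,p)+D(y,p)-D(x,y)-D(p,p)$,
which is exactly the kernel of Lemma~\ref{thm:defK} with $x_0=p$. Since $D$ is \cnegdef{} (and in particular symmetric), that lemma yields that $K$ is \posdef{}, and then Theorem~\ref{thm:pSmoothK} yields that $\ptransform{K}{}$ is \posdef{}.

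Next I would verify the algebraic identity $\ntransform{D}{p}(x,y) = 1-\ptransform{K}{}(x,y)$. Expanding $K(x,x) = 2D(x,p)-D(x,x)-D(p,p)$ and $K(y,y)=2D(y,p)-D(y,y)-D(p,p)$, one finds that $K(x,x)+K(y,y)-K(x,y)$ equals $D(x,p)+D(y,p)+D(x,y)-\sum_{z\in\{x,y,p\}}D(z,z)$, which is precisely the denominator of \refreq{cndpT}, and that $K(x,x)+K(y,y)-2K(x,y) = 2D(x,y)-D(x,x)-D(y,y)$, the numerator. Since $\ptransform{K}{}(x,y)$ is $K(x,y)$ over that denominator, subtracting it from $1$ replaces the numerator $K(x,y)$ by the denominator minus $K(x,y)$, i.e. by $K(x,x)+K(y,y)-2K(x,y)$, so $\ntransform{D}{p}(x,y)=1-\ptransform{K}{}(x,y)$ as claimed. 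The degenerate case $K(x,x)=K(y,y)=0$ (the $0/0$ case of \refreq{cndpT}) is governed by the convention $\ptransform{K}{}=1$ built into \refreq{pdpT}, giving $\ntransform{D}{p}=0$ there; this matches the numerator computation, which also vanishes in that case by the feature-space argument at the end of the proof of Theorem~\ref{thm:pSmoothK}.

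Finally I would conclude \cnegdef{}-ness: since $\ptransform{K}{}$ is \posdef{}, $-\ptransform{K}{}$ is \negdef{} and hence \cnegdef{}; the constant kernel $1$ is \cnegdef{} by Proposition~\ref{thm:constantK} (with $f\equiv 1/2$); and a sum of \cnegdef{} kernels is \cnegdef{} (as already used in the proof of Theorem~\ref{thm:pSmoothK}). Hence $1-\ptransform{K}{}=\ntransform{D}{p}$ is \cnegdef{}.

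I expect the only real obstacle to be bookkeeping: checking that the six pairwise $D$-values cancel correctly so that the numerator and denominator of $1-\ptransform{K}{}$ coincide exactly with those of \refreq{cndpT}, and making sure the exceptional (zero-diagonal) points are treated consistently with the conventions of \refreq{pdpT}. No new inequalities are needed; the whole argument reduces to the identity $\ntransform{D}{p}=1-\ptransform{K}{}$ together with closure of \cnegdef{} kernels under addition and under negation of \posdef{} kernels.
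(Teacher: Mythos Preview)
Your proposal is correct and follows essentially the same route as the paper: define $K_p$ via Lemma~\ref{thm:defK}, verify the identity $\ptransform{K_p}{}+\ntransform{D}{p}=1$ (the paper's \refreq{complementary}) by the same denominator/numerator computation you outline, and then deduce \cnegdef{}-ness of $\ntransform{D}{p}$ from \posdef{}-ness of $\ptransform{K_p}{}$. The paper does the last step by writing out the quadratic form directly under the constraint $\sum c_i=0$ rather than phrasing it as ``constant $1$ is \cnegdef{} plus $-\ptransform{K_p}{}$ is \cnegdef{},'' but that is the same argument.
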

\begin{proof}
We can define a \posdef{} kernel $K_p$ according to the relation given by Lemma~\ref{thm:defK}, \ie
\begin{equation}
K_p(x,y) = D(x,p)+D(y,p)-D(x,y)-D(p,p).
\label{eq:pCenteredK}
\end{equation}
Using \refreq{pCenteredK}, note that $K_p(x,x)=2D(x,p)-D(x,x)-D(p,p)$.
Furthermore, note that
\ifIEEE
\begin{equation}
\small
\begin{split}
K_p(x,x)+K_p(y,y)-K_p(x,y)
\hspace{-0.2em}&=\hspace{-0.2em}D(x,p)+D(y,p)+D(x,y)\\
&\quad\hspace{-0.2em}-D(x,x)-D(y,y)-D(p,p).
\end{split}
\end{equation}
\else
\begin{equation}
\small
\begin{split}
K_p(x,x)+K_p(y,y)-K_p(x,y)
&=D(x,p)+D(y,p)+D(x,y)-D(x,x)-D(y,y)-D(p,p).
\end{split}
\end{equation}
\fi
We see that the denominator of $\ntransform{D}{p}$ is the same as that of $\ptransform{K_p}{}$.
Note then that 
\begin{equation}
\ptransform{K_p}{}(x,y)+\ntransform{D}{p}(x,y) = 1.
\label{eq:complementary}
\end{equation}
If $x_1, \dots, x_n \in X$, $c_1, \dots, c_n \in \mathbb{R}$, and $\sum_{i=1}^n c_i=0$, then
\ifexpanded
\begin{equation}
\begin{split}
\sum_{i,j=1}^nc_ic_j\ntransform{D}{p}(x_i,x_j) &= \sum_{i,j=1}^nc_ic_j\left(1-\ptransform{K_p}{}(x_i,x_j)\right)\\
&= \sum_{i,j=1}^nc_ic_j\left(-\ptransform{K_p}{}(x_i,x_j)\right)\leq 0.
\end{split}
\end{equation}
\else
\begin{equation}
\begin{split}
\sum_{i,j=1}^nc_ic_j\ntransform{D}{p}(x_i,x_j) = \sum_{i,j=1}^nc_ic_j\left(-\ptransform{K_p}{}(x_i,x_j)\right)\leq 0.
\end{split}
\end{equation}
\fi
We have thus shown that $\ntransform{D}{p}$ is \cnegdef{}.
\end{proof}

If $K(x,y) \geq 0$, then $\ptransform{K}{}(x,y) \in [0,1]$.
Otherwise, $\ptransform{K}{}(x,y) \in [-1/3, 1]$.
Consequently, $\ntransform{D}{p}(x,y) \in [0,4/3]$ and $\ntransform{D}{p}(x,y) > 1$ \iffy{} $D(x,y)+D(p,p) > D(x,p)+D(y,p)$. 
In addition, Theorem~\ref{thm:pSmoothK} also holds for strictly \posdef{} $K$.
Using Theorem~\ref{thm:pSmoothK} with $K$ as the intersection kernel therefore provides an easy proof for the \posdef{}-ness of the Jaccard index,
\ifexpanded
\begin{equation}
J(A,B) = \frac{\mu(A \cap B)}{\mu(A \cup B)}.
\end{equation}
\else
$J(A,B) = \mu(A \cap B)/\mu(A \cup B).$
\fi
Note that $\ntransform{D}{p}$ generalizes the well-known biotope transform~\mycite{dd-eod}, showing that it preserves  \cnegdef{}-ness in addition to metric properties.
As an example, suppose $A$ and $B$ are sets and $D(A,B) = |\mu(A)-\mu(B)|$. 
This kernel is \cnegdef{}.
By Corollary~\ref{thm:biotopeD} with $p=\emptyset$ followed by some simplification, we can conclude that the following is \cnegdef{}:
\begin{equation}
\begin{split}
\ntransform{D}{\emptyset}(A,B) = \frac{|\mu(A)-\mu(B)|}{\max\left\{\mu(A),\mu(B)\right\}}.
\end{split}
\end{equation}

\section{Earth Mover's Intersection: A Set Theoretic Interpretation of \ac{emd}}\label{sec:emi}
In this section we introduce \ac{emi}, a useful concept and \posdef{} analog to \ac{emd} that computes the similarity between two sets rather than their difference for a given ground distance.
The name comes from the following motivating scenario.

Suppose there are two sets of two-dimensional points where one is a slightly perturbed version of the other.
According to the strict definition of set intersection given by \refreq{intersection}, their intersection is empty despite the fact that they are clearly related by their elements.
The inability of set intersection to account for the sets' inherent similarity is a problem.
\ac{emd} provides a natural solution to this problem, although it is proportional to the sets' difference rather than similarity.
\ac{emd} also reflects the qualities of whatever norm is chosen to compare the individual points.
We now show that \ac{emd} and subsequent related functions define smooth (in the sense of strictness of equality)  generalizations or approximations of classic set operations.

Sets are usually normalized prior to application of \ac{emd} by dividing their density function by their total mass, an operation analagous to normalizing a vector to unit norm. 
The disadvantage of this method is that sets with differently scaled but otherwise identical density functions become indistinguishable post-normalization.
As a side-effect, one removes an entire dimension of the data (for the most extreme case, consider singleton point sets with non-negative mass on the real line).
An application where this distinction is important is that of multi-object tracking and filtering~\mycite{svv-cmpemof,rvcv-mpemtta}; normalizing set mass can cause one to ignore the fact that the incorrect number of objects are being tracked.
For our set theoretic interpretation of \ac{emd}, we prefer to retain the sets' original mass and transport excess mass to a predetermined point $p \in X$.
One could also consider this a form of additive normalization by supplementing mass at the point $p$.
\ac{emd} then more accurately represents the relative magnitudes of set differences as well as distinguishes differently scaled sets.

Define the term $\emdnot_{p}$ to represent the transportation of excess mass from the larger of two sets $A$ and $B$ to some sink $p \in X$:
\begin{equation}
\ifIEEE
\footnotesize
\fi
\begin{split}
\hspace{-1.5em}\emdnot_{p}(A,B)=& \sum_{b \in B}\left(\dens_B(b)\hspace{-0.25em}-\hspace{-0.5em}\sum_{a \in A}f^*(a,b)\right)\left[D(b,p)\hspace{-0.25em}-\hspace{-0.25em}\frac{D(p,p)}{2}\right]\hspace{-0.25em},
\end{split}
\label{eq:emdnot}
\end{equation}
where $D$ is the ground distance, $f^*$ is the optimal flow, and we assume without loss of generality that $\mu(A) \leq \mu(B)$.
The total cost of transforming one set into another is then given by
\begin{align}
\begin{split}
\emdhat_{p}(A,B) &= \emd(A,B) + \emdnot_p(A,B),
\end{split}
\label{eq:generalizedEMDHat}
\end{align}
where we have adopted the notation for Pele and Werman's \EMDHat[\alpha]. 
Note that $p$ does not necessarily have to be in $X$ (in which case we must replace $D$ with an appropriate function in \refreq{emdnot}).
Ideally, though, $p$ is a reserved point that does not naturally appear in the sets under consideration.
Otherwise, there is a different type of potential identity loss.

We define \ac{emi} as the kernel resulting from Lemma~\ref{thm:defK} with $x_0=\emptyset$ and $D=\emdhat_p$:
\begin{equation}
\ifIEEE
\small
\fi
\begin{split}
\emi_p(A,B) = & \emdhat_p(A, \emptyset)+\emdhat_p(B,\emptyset)-\emdhat_p(A,B).
\end{split}
\end{equation}
Note that \ac{emi} is  \posdef{} whenever \EMDHat{} is \cnegdef{} for some collection of sets (and vice versa).
By assuming $p \in X$, we can define a \posdef{} kernel $K_p$ according to Lemma~\ref{thm:defK} with $x_0=p$ and $D$ as the ground distance, which we can then use with \refreq{emdnot} to simplify \ac{emi} to
\begin{equation}
\begin{split}
\emi_p(A,B) =& \sum_{a \in A}\sum_{b \in B}f^*(a,b)K_p(a,b).
\label{eq:emi}
\end{split}
\end{equation}
Observe that the minimum-cost maximum flow with respect to $D$ is the same as the maximum-cost maximum flow with respect to $K_p$, regardless of the choice of $p$.
As a result, \ac{emi} can be specified in terms of just a \posdef{} ground distance without explicitly specifying $p$.
The definition of \ac{emi} also provides some insight into the \ac{pmk}~\mycite{gd-pmkelsf}, which can be viewed as an approximation of $\emi_{\vec{0}}$ on $\mathcal{F}(\mathbb{R}^n)$.
One may also consider an alternative definition $\emi^\prime_p(A,B) = \emi_p(A,B)+\sum\sum f^*(a,b)D(p,p)$ that is also \posdef{} if  $D(p,p) \geq 0$ and \EMDHat{} is \cnegdef{}; this is equivalent to discarding $D(p,p)$ in \refreq{emdnot}.

As our first example of a situation in which \ac{emi} is \posdef{} on $\mathcal{F}(X)$ (and hence \EMDHat{} and \ac{emd} are respectively \cnegdef{} on $\mathcal{F}(X)$ and  $\mathcal{P}(X)$), consider the discrete metric, which can trivially be verified to be \cnegdef{}.
Define the \textit{discrete kernel} corresponding to this ground distance to be
$
K_{\text{0-1}}(x,y) = 1- \discrete(x,y),
$
which is \posdef{}.
We can show that \ac{emi} in this case is equivalent to the intersection kernel.
\begin{proposition}
Let $\emi_{\text{0-1}}(A,B)$ be \ac{emi} equipped with the discrete kernel as the ground distance on an arbitrary set $X$. Then $\emi_{\text{0-1}}$ is equivalent to the intersection kernel.
\end{proposition}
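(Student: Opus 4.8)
The plan is to unfold $\emi_{\text{0-1}}$ into its flow form \refreq{emi} and solve the resulting optimal-transport problem by inspection. First I would note that, with the discrete kernel $K_{\text{0-1}}$ playing the role of the positive definite ground kernel, \refreq{emi} reads $\emi_{\text{0-1}}(A,B)=\sum_{a\in A}\sum_{b\in B}f^{*}(a,b)\,K_{\text{0-1}}(a,b)$, where $f^{*}$ is any optimal flow, \ie one maximizing $\sum_{a}\sum_{b}f(a,b)K_{\text{0-1}}(a,b)$ subject to the capacity constraints \refreq{outFlow}, \refreq{inFlow} and the total-mass constraint \refreq{totalflow}. Since $K_{\text{0-1}}(a,b)$ is $1$ when $a=b$ and $0$ otherwise, this objective collapses to $\sum_{x}f(x,x)$, the total mass that a flow routes from an element to itself; so it suffices to show that the maximum of $\sum_{x}f(x,x)$ over feasible flows equals $\mu(A\cap B)$, whence $\emi_{\text{0-1}}(A,B)=\mu(A\cap B)$, which is by definition the intersection kernel.

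The upper bound $\sum_{x}f(x,x)\le\mu(A\cap B)$ is immediate: for any feasible $f$, keeping only the term $f(x,x)$ in \refreq{outFlow} and \refreq{inFlow} gives $f(x,x)\le\dens_A(x)$ and $f(x,x)\le\dens_B(x)$, hence $f(x,x)\le\min\{\dens_A(x),\dens_B(x)\}=\dens_{A\cap B}(x)$ by \refreq{intersection}; summing over $x$ finishes it.

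For the matching lower bound I would exhibit a feasible flow that attains it. Assuming without loss of generality $\mu(A)\le\mu(B)$, I would start from the ``diagonal'' flow $\hat f(x,x)=\dens_{A\cap B}(x)$ and route whatever mass remains arbitrarily. At every $x$ exactly one of the residuals $\dens_A(x)-\dens_{A\cap B}(x)$ and $\dens_B(x)-\dens_{A\cap B}(x)$ vanishes, because $\dens_{A\cap B}(x)$ is the smaller of $\dens_A(x)$ and $\dens_B(x)$; thus no point carries spare supply on the $A$ side and spare capacity on the $B$ side at once, the residual $A$-supply totals $\mu(A)-\mu(A\cap B)$, and the residual $B$-capacity totals $\mu(B)-\mu(A\cap B)\ge\mu(A)-\mu(A\cap B)$. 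Since the extra flow required by \refreq{totalflow} is exactly $\min\{\mu(A),\mu(B)\}-\mu(A\cap B)=\mu(A)-\mu(A\cap B)$, I can route the residual $A$-mass into the residual $B$-capacity (\eg by the northwest-corner rule) without touching any $\hat f(x,x)$, obtaining a flow that meets \refreq{outFlow}, \refreq{inFlow} and \refreq{totalflow}. Hence the maximum is at least $\sum_{x}\hat f(x,x)=\mu(A\cap B)$, and combined with the upper bound it equals $\mu(A\cap B)$.

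The only step with any substance is this last construction, and the single point there that deserves care is that a transportation problem whose total supply does not exceed its total capacity always admits a feasible flow, and that completing $\hat f$ in this way cannot inadvertently raise a diagonal entry $\hat f(x,x)$. The second worry is free, since a point never holds residual mass on both sides simultaneously; the first is standard. Everything else (the reduction to $\sum_x f(x,x)$ and the upper bound) follows directly from the definitions.
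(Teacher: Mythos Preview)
Your proof is correct and follows essentially the same line as the paper's: both reduce $\emi_{\text{0-1}}$ via \refreq{emi} to the maximum of $\sum_x f(x,x)$ over feasible flows and argue that the optimum saturates each diagonal entry to $\min\{\dens_A(x),\dens_B(x)\}$. The paper dispatches this in two sentences (``the only way to increase the cost \dots is to send available mass from a point in one set up to the capacity allowed by the other set at the same location'' and ``the exact mapping of the remaining mass is irrelevant''), whereas you separate the argument cleanly into an upper bound from the capacity constraints and an explicit feasible flow attaining it; your version is more careful but not a different approach.
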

\begin{proof}
The goal is to find the maximum-cost maximum flow subject to constraints, and the only way to increase the cost with the discrete kernel is to send available mass from a point in one set up to the capacity allowed by the other set at the same location.
Therefore, $f^*(a,a)$ will be saturated up to the available capacity at $a$ in each set, \ie
\begin{equation}
f^*(a,a) = \min\left\{\dens_A(a),\dens_B(a)\right\}.
\label{eq:inclusionFlow}
\end{equation}
The cost to transport this mass is simply the amount of mass transported.
The exact mapping of the remaining mass is irrelevant as it costs nothing to move.
As a result,
\ifexpanded
\begin{equation}
\begin{split}
\emi_{\text{0-1}}(A,B) =& \sum_{a \in A \cup B} f^*(a,a) \\
=& \sum_{a \in A \cup B}\min\left\{\dens_A(a),\dens_B(a)\right\}\\
=& \mu(A \cap B).
\end{split}
\end{equation}
\else
\begin{equation}
\small
\begin{split}
\emi_{\text{0-1}}(A,B) =& \sum_{a \in A \cup B} f^*(a,a) = \sum_{a \in A \cap B} f^*(a,a) =\mu(A \cap B).
\end{split}
\end{equation}
\fi
\end{proof}
Since the intersection kernel is \posdef{}~\mycite{btb-ghikir}, we conclude that $\emi_{\text{0-1}}$ is as well.
One can then deduce that $\emd_{\text{0-1}}$ and $\emdhat_{\text{0-1}}$ give measures of the set difference between $A$ and $B$.
Specifically,  $\emd_{\text{0-1}}$ gives the set difference of the larger set from the smaller, and $\emdhat_{\text{0-1}}$ gives the set difference of the smaller set from the larger. 
The sum of both yields the symmetric difference.
One may also apply \refreq{pdpT} with $K=\emi_{\text{0-1}}$ or \refreq{cndpT} with $D=\emdhat_{\text{0-1}}$ and $p=\emptyset$ to obtain the Jaccard index and distance.

Switching to a ground distance other than the discrete metric is like allowing a degree of uncertainty in element identity.
The sharper or more concave the comparison function, the closer \ac{emd} and its derivatives are to their respective binary set operations.
The point $p$ is used to determine the cost of an unmatched element, which could potentially vary if some point is considered more important than another.
Practically, thresholding a ground distance by some upper bound can be used to artificially induce concavity and make comparisons more strict.

Another result that can be derived as a special case of \ac{emi} follows.

\begin{proposition}
If there exists a function $g: X \to \mathbb{R}$ such that the ground distance $D(x,y) = g(x)+g(y)$, then EMI$_p = 0$ and is trivially \posdef{} on $\mathcal{F}(X)$ for any choice of $p$.
\label{thm:constantEMD}
\end{proposition}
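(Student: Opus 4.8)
The plan is to show that the positive definite kernel $K_p$ that Lemma~\ref{thm:defK} associates to the ground distance $D$ is identically zero, after which the simplified expression \eqref{eq:emi} for $\emi_p$ collapses to a sum of zeros. Recall $K_p(x,y) = D(x,p) + D(y,p) - D(x,y) - D(p,p)$. First I would substitute $D(x,y) = g(x) + g(y)$ (which is \cnegdef{} by Proposition~\ref{thm:constantK}, hence an admissible ground distance here) and observe that the four terms cancel: $K_p(x,y) = (g(x)+g(p)) + (g(y)+g(p)) - (g(x)+g(y)) - (g(p)+g(p)) = 0$ for every $x,y$. When $p \notin X$, the replacement function used in \eqref{eq:emdnot} is still of the additive form $D(\cdot,p) = g(\cdot) + c$, so the same cancellation goes through with $g(p) := c$ and $p$ may be treated as an honorary element of $X$ throughout.

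Given $K_p \equiv 0$, I would then invoke $\emi_p(A,B) = \sum_{a \in A}\sum_{b \in B} f^*(a,b)\, K_p(a,b)$ from \eqref{eq:emi}: every summand vanishes, so $\emi_p(A,B) = 0$ for all $A,B \in \mathcal{F}(X)$ and any $p$, regardless of what the optimal flow $f^*$ turns out to be. One could equally expand $\emi_p$ directly through $\emdhat_p$: using $D(x,y) = g(x)+g(y)$ together with the fact --- immediate from the total-flow constraint \eqref{eq:totalflow} --- that when $\mu(A) \le \mu(B)$ every feasible flow saturates $\sum_{b} f(a,b) = \dens_A(a)$, one finds $\emdhat_p(A,\emptyset) = \sum_a \dens_A(a)g(a)$ and $\emdhat_p(A,B) = \sum_a \dens_A(a)g(a) + \sum_b \dens_B(b)g(b)$, the flow-dependent contributions cancelling, whence $\emdhat_p(A,\emptyset) + \emdhat_p(B,\emptyset) - \emdhat_p(A,B) = 0$.

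Finally it remains only to note that the zero kernel is trivially \posdef{}: it is symmetric and $\sum_{i,j=1}^n c_i c_j \cdot 0 = 0 \ge 0$ for every finite choice of points and real coefficients. The hardest part, such as it is, is merely the bookkeeping needed to confirm that the $p \notin X$ case and the edge case $B = \emptyset$ in \eqref{eq:emdnot} behave as expected; the substantive content is the one-line cancellation that makes $K_p$ vanish.
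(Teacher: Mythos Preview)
Your proof is correct and follows essentially the same approach as the paper: compute $K_p \equiv 0$ from the additive form of $D$, then invoke the simplified expression \eqref{eq:emi} to conclude $\emi_p \equiv 0$, which is trivially \posdef{}. The paper's version is just terser, omitting the explicit cancellation, the $p \notin X$ bookkeeping, and your alternative direct expansion through $\emdhat_p$.
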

\begin{proof}
Let $f(a,b)$ be the maximum-cost maximum flow between sets $A$ and $B$ with respect to $K_p$ defined using Lemma~\ref{thm:defK} with $x_0=p$.
Note that in this case,
$
K_p(x,y) = 0.
$
As a result,
$
\emi_p(A,B)=0,
$
which is trivially \posdef{}.
\end{proof}
If $g(p) \geq 0$ and we opt to use \ac{emi}$^\prime$ by discarding $D(p,p)$ in \refreq{emdnot}, then 
\begin{equation}
\emi_p^\prime(A,B)=2g(p)\min\left\{\mu(A),\mu(B)\right\},
\end{equation}
which is simply a scaled version of the \posdef{} min-kernel.

We expect there to be many other instances of \posdef{} kernels based either directly or indirectly on \ac{emd}.
For example, Cuturi~\mycitet{c-sdlcot} proposed a regularized version of \ac{emd} via an additional entropic term that yields the exponent of the \textit{\ac{idk}},
\begin{equation}
\small
\mathit{IDK}(A,B) = \exp{\left(-u\sum_{a \in A}\sum_{b\in B}\dens_A(a)\dens_B(b)D(a,b)\right)},
\label{eq:idk}
\end{equation}
when the entropic term's effect is maximized.
Refer to Appendices~\ref{sec:transportationRealLine},~\ref{sec:transportationCircle},~\ref{sec:transportationL2Hypersphere} for further examples.
One should note that the computation of \ac{emi} (or \EMDHat{}) relies entirely on algorithms for computing \ac{emd}, so there is no significant difference computationally between the two.

\section{Experiments}
In this section we describe experiments with classification using \acp{svm} designed to demonstrate the utility of \EMDHat{} as well as the utility of the definite-preserving transformation of Section~\ref{sec:pdpT} with respect to \ac{emd}.
To our knowledge, \EMDHat{} (\textit{not} EMD) has not been applied in a kernel setting and we therefore perform the first such experiments.
In particular, we evaluate the effect of choosing some different values of $p$ (the sink to which excess mass is transported in our generalization of Pele and Werman's \EMDHat{}).
For each of the \ac{emd} variants, we make use of Theorem~\ref{thm:expK} to construct generalized \ac{rbf} kernels of the form $\exp{(-uD_{\emd})}$, where $D_{\emd}$ is an \ac{emd}-based distance between sets. 
In order to avoid the overhead of tuning $u$ via cross-validation, we assign $u$ to be the inverse of the average value of $D_{\emd}$ on the training set as suggested by Zhang et al.\@~\mycitet{zmls-lfkctoccs}.

We also show that when using unnormalized sets, especially when the magnitude of the mass has semantic significance relevant to classification, that \EMDHat{} is superior to \ac{emd}.
Since we are dealing with indefinite kernels, we evaluate the results in the context of two techniques designed to address the nonconvex optimization encountered in training \acp{svm} with such kernels.
The techniques mentioned are eigenvalue shifting of the kernel matrix and the \ac{ksvm} recently proposed by~\mycite{lco-lsvmks}.
Both methods were chosen for their relative simplicity of implementation as well as the fact that test points (or associated kernel evaluations) do not need to be modified. 
Where appropriate, these methods are balanced against \acp{svm} trained directly with the indefinite kernels (see Tables~\ref{tab:mnist} and~\ref{tab:posture}).

\textit{Shift} is a heuristic that involves shifting the eigenvalues of the kernel matrix to be non-negative (\eg by adding $s\mathbf{I}$ to the kernel matrix, where $s$ is the amount to shift each eigenvalue and $\mathbf{I}$ is the identity matrix).
Shifting causes the \ac{svm} training problem to become convex, assuring a globally optimal solution.
Wu et al.\@~\mycitet{wcz-atnpsdsmkm} show that shifting adds a regularization term that penalizes the norm of the support vector coefficients.
Thus, simply choosing a very large $s$ that guarantees \posdef{}-ness is not necessarily beneficial as it may constrain possible solutions.
The smallest possible $s$ (\ie the magnitude of the least negative eigenvalue) is generally a good default choice.
Approximations for $s$ that assure \posdef{}-ness without requiring an eigendecomposition of the kernel matrix can be used.
We did not make use of these approximations, however. 

On the other hand, \ac{ksvm} is formulated in the theory of Krein spaces (generalizations of Hilbert spaces with indefinite inner products) and may be considered a state-of-the-art indefinite kernel technique.
Our results certainly reflect its ability to compensate for deficiencies in an indefinite kernel.
However, \ac{ksvm} is computationally expensive, requiring an eigendecomposition of the entire precomputed kernel matrix used for training.
Therefore, Loosli et al.\@~\mycitet{lco-lsvmks} also proposed \ac{ksvm}-L, a more practical alternative that uses partial decompositions.

For completeness, we briefly describe the \ac{ksvm} algorithm.
Given a kernel matrix $G_K$ and label vector $\vec{y}$ containing $\pm 1$ for each respective positive or negative instance, one must compute an eigendecomposition of $YG_KY$, where $Y = \diag{\vec{y}}$ is an otherwise zero matrix with $\vec{y}$ on the diagonal.
If $U$ and $D$ are the resulting eigenvector and eigenvalue matrices satisfying $UDU^\intercal=YG_KY$, then one trains the \ac{svm} using a standard solver with the \posdef{} kernel matrix $\overline{G}_K=USDU^{\intercal}$, where $S = \sign{D}$.
Finally, one transforms the resulting support vector coefficients $\vec{\overline{\alpha}}$ (not to be confused with $\alpha$ in \EMDHat[\alpha]) to obtain support vector coefficients $\vec{\alpha} = USU^\intercal\vec{\overline{\alpha}}$ in the original indefinite space.
The solution is not sparse.
\ifexpanded
One may note that \ac{ksvm} is equivalent to flipping each negative eigenvalue of the kernel matrix to be positive prior to transforming the result.
\fi
We also note that a one-versus-all scheme for multiclass \acp{svm} has a distinct computational advantage over one-versus-one schemes since if $\vec{y}_i$ is the label vector treating the $i$-th class as positive and the remainder negative and $V$ contains the eigenvectors of $G_k$, then $U_i=Y_iV$ provides the eigenvectors of $Y_iG_kY_i$.
Consequently, only one eigendecomposition is required regardless of the number of classes.
\ifexpanded
We take advantage of this fact in our experiments; \ie all results are computed using one-versus-all binary \acp{svm}.
\else
We take advantage of this fact by using one-versus-all binary \acp{svm} in all of our experiments.
\fi

For select datasets (see Section~\ref{sec:doex}), we compare against more traditional kernels including linear, Gaussian, \ac{pmk}~\mycite{gd-pmkelsf}, and \ac{idk}~\cite{c-sdlcot}.
Linear and Gaussian kernels require special treatment as there is no obvious, uncontrived way for them to handle unordered sets of features.
We lexicographically sorted the feature vectors of an instance and concatenated them prior to using each of the two former kernels, appending zeros for smaller instance sets.
\ac{pmk} consists of constructing a multi-resolution $d$-dimensional histogram with resolutions ranging from $1$ to $L^d$ bins.
\ac{idk} is given by~\refreq{idk}, and we use the same selection criteria for $u$ here as we do for the other \ac{emd}-based kernels.
The former three kernels are \posdef{}, whereas \ac{idk} is indefinite on unnormalized sets/histograms.
We thus apply Shift and Krein SVM to \ac{idk}.

\subsection{Datasets}
Each considered kernel---\ac{emd} with Rubner's scaling, \EMDHat{}, and its biotope transformation \EMDHat{}$_{T,p}$ (hereafter referred to as \ac{emjd})---was evaluated on four datasets: the texture database KTH-TIPS~\mycite{hcfe-osrwcmc}, the object category database Caltech-101~\mycite{ffp-osloc}, a dataset Corner-MNIST (CMNIST) based on the handwritten character database MNIST~\mycite{mnist}, and a motion capture hand posture dataset collected by the authors\footnote{The posture dataset is available at \url{http://www2.latech.edu/~jkanno/collaborative.htm}.}.
The Euclidean distance served as the ground distance for each dataset except for Caltech-101, for which it was squared.

The KTH-TIPS database consists of 10 texture classes under varying scale, pose, and illumination with 81 instances per class.
Images are standardized by resizing to a horizontal resolution of 480 pixels while preserving aspect ratio.
We adopted much of the experimental design of Zhang et al.\@~\mycitet{zmls-lfkctoccs}, constructing image \textit{signatures} from SIFT descriptors. 
The SIFT descriptor~\mycite{l-difsik} computes an $N$-bin histogram of image gradient orientations for an $M \times M$ grid of samples in the region of interest, resulting in an $M\times M\times N$ dimensional vector.
We used the implementation of the SIFT descriptor provided by Vedaldi and Fulkerson~\mycitet{vf-vlfeat} with $M=4$ and $N=8$.
The resulting 128-dimensional vectors were scaled to have a Euclidean norm of 1 to reduce the influence of illumination changes.
The descriptors were then clustered using a $k$-means algorithm (with $k=40$).
Each mean was weighted with the percentage of descriptors assigned to it, and the means paired with these weights constituted the so-called signature for a single image.

A very similar feature extraction procedure was conducted for the Caltech-101 dataset composed of color images of 101 categories (\eg face, car, etc.) with varied presentation. 
Instead of SIFT descriptors, the PHOW descriptor implemented by Vedaldi and Fulkerson~\mycite{vf-vlfeat} was used to represent images.
At a high level, the PHOW descriptor is a dense SIFT extractor (the regions of interest are densely sampled in a grid) that can operate on multiple color channels instead of just grayscale.
However, we simply used grayscale.
Sets were normalized for both KTH-TIPS and Caltech-101.

The MNIST dataset comprises $28\times28$ grayscale images of handwritten digits ranging from 0 to 9.
CMNIST was created in the following manner.
Noble's version~\mycite{n-fc} of the Harris corner detector~\mycite{hs-cced} was used to identify keypoints in each image (implemention again provided by Vedaldi and Fulkerson~\mycitet{vf-vlfeat}).
Images were smoothed with a Gaussian window with a variance of 1 prior to application of the Harris response function, which also used a Gaussian window with a variance of 1. 
Local maxima in the response were interpreted as corners.
The set of coordinates (scaled to lie between 0 and 1) of these detected corners then constitute the features of the image with the expected number of corners and their locations depending upon the digit.
The number of detected corners typically ranged from 5 to 15.

The final dataset consists of variable size unlabeled 3D point set examples of 5 hand postures captured in a Vicon motion capture environment for 12 users.
Each unlabeled point represents the 3D position of a motion capture marker attached to a glove in a coordinate system localized to the user's hand (via a rigid pattern of labeled markers affixed to the back of the glove). 
Eleven unlabeled markers were attached to the glove.
However, not all markers are necessarily detected at a given time due to occlusion and the relative size of the markers with respect to the capture space.
Therefore, the number of unlabeled points in an instance ranges from 3 to 12 (extraneous markers, though uncommon, are also possible).
Markers more than 200 millimeters from the local  origin were pruned, and sets with fewer than 3 markers were discarded.
No other preprocessing or feature extraction was necessary; points remained at millimeter scale.

\subsection{Design of Experiments}\label{sec:doex}
Each experiment on each dataset involves the choice of a different sink $p$ to which excess mass is sent.
If the ground distance is thresholded and $p$ lies beyond the threshold for every point in the training and test sets, then one can use a flat rate equal to the threshold as the cost of transporting excess mass.
Therefore, we simply use the threshold to identify different experiments.
The thresholds used are reported in the provided tables (\eg Table~\ref{tab:texture}).
One will note that the bottom row of each table has no threshold (denoted by a dash), and in this case $p$ was generally chosen to be the origin with the exception of CMNIST, where it was chosen to be the center of an image, $\begin{bmatrix}0.5, 0.5\end{bmatrix}^\intercal$.
In the case of KTH-TIPS and Caltech-101, choosing the origin is not much different than choosing a threshold of 1 since every point lies on the surface of a unit hypersphere.
The advantage of flat thresholds lies in their simplicity of implementation (the precise value of the optimal flow is irrelevant) as well as the ability to use certain faster algorithms~\mycite{pw-fremd}.
In addition, \ac{idk} is compatible with thresholded ground distances, but the concept of a sink $p$ does not apply (hence the N/A entries in Tables~\ref{tab:mnist} and~\ref{tab:posture}).
Linear, Gaussian, and \ac{pmk} kernels, on the other hand, do not use a ground distance at all, and thus their results are presented separately in Table~\ref{tab:traditional}.
We only evaluated these alternative kernels on CMNIST and the posture recognition dataset.
For \ac{pmk}, we chose $L=128$ for CMNIST and $L=256$ for posture recognition.

The following data selection schemes were repeated for each experiment (threshold) with the exception that the selection of data for experiments with no threshold matched that of the highest threshold in order to enable a direct comparison.
For KTH-TIPS (and Caltech-101), 40 (15) images from each class were randomly drawn to be the training set with an equivalently drawn disjoint test set. 
This random selection was repeated 5 times in order to obtain 5 training/test set pairs, the results of which were averaged.
For CMNIST, 200 examples from each class were randomly chosen and 5-fold cross validation was computed for each experiment.
For the posture recognition dataset, special consideration was required due to the fact that there is signficant correlation and even near duplication for samples corresponding to a single user.
Therefore, a leave-one-user-out approach was employed where each of the 12 users served in turn as the test set. 
As a result, experiments measured the generalization of the classifier to new users.
The dataset's size was reduced and classes balanced by randomly selecting 75 examples per class per user.

\begin{table*}
\centering
\minipage{0.65\textwidth}
\centering
\caption{Accuracies for texture and object category recognition on normalized sets with KTH-TIPS and Caltech-101. All kernels were found to be positive definite. Since sets are normalized, \ac{emd} is equal to \EMDHat{}.}
\begin{tabular}{| c ||c c | c c |}
\hlx{hv[2-3,4-5]} 
&\multicolumn{2}{c|}{\bf KTH-TIPS}& \multicolumn{2}{|c|}{\bf Caltech-101} \\
\hlx{hv}
Threshold & $\emd/\emdhat$ & $\mathit{EMJD}$ & $\emd/\emdhat$ & $\mathit{EMJD}$   \\
\hlx{vhv}
0.5 &  $\underset{\pm 6.19}{71.45}$ & $\underset{\pm 6.16}{70.95}$ & $\mathbf{\underset{\pm 0.90}{49.97}}$ & $\underset{\pm 0.80}{49.65}$   \\
1 & $\mathbf{\underset{\pm 1.00}{74.75}}$ & $\underset{\pm 0.65}{74.55}$ & $\underset{\pm 0.75}{48.77}$ & $\underset{\pm 0.81}{48.84}$  \\
$\sqrt{2}$ & $\underset{\pm 7.96}{70.70}$ & $\underset{\pm 8.06}{70.85}$ & $\underset{\pm 1.39}{48.57}$ & $\underset{\pm 1.19}{48.71}$ \\
- & $\underset{\pm 7.96}{70.70}$ & $\underset{\pm 8.07}{70.80}$ & $\underset{\pm 1.39}{48.57}$ & $\underset{\pm 1.26}{48.55}$ \\
\hlx{h}
\end{tabular}
\label{tab:texture}
\endminipage
\minipage{0.35\textwidth}
\centering
\caption{Accuracies for ground-distance-invariant \posdef{} kernels evaluated on CMNIST and the posture recognition dataset.}
\begin{tabular}{| c || c c |}
\hlx{hv}
Kernel & \textbf{CMNIST} & \textbf{Posture} \\
\hlx{vhv}
Linear & $\underset{\pm 1.70}{17.15}$ & $\underset{\pm 12.92}{71.71}$ \\
Gaussian & $\underset{\pm 2.65}{42.00}$ & $\underset{\pm 9.77}{81.09}$ \\
\ac{pmk} & $\underset{\pm 1.72}{47.90}$ & $\underset{\pm 13.35}{69.42}$ \\
\hlx{h}
\end{tabular}
\label{tab:traditional}
\endminipage

\vskip 5pt
\caption{Accuracies for handwritten character recognition on unnormalized sets with the CMNIST data.}
\begin{tabular}{| c ||c c c c | c  c c c | c  c c c|}
\hlx{hv[2-5,6-9, 9-12]} 
&\multicolumn{4}{c|}{\bf Indefinite}& \multicolumn{4}{|c|}{\bf Shift}&  \multicolumn{4}{c|}{\bf KSVM} \\
\hlx{hv}
Threshold & $\emd$ & $\emdhat$ & $\mathit{EMJD}$ & \acs{idk} & $\emd$ & $\emdhat$ & $\mathit{EMJD}$ & \acs{idk} &  $\emd$ & $\emdhat$ & $\mathit{EMJD}$  & \acs{idk}\\
\hlx{vhv}
0.25 & $\underset{\pm 5.45}{34.30}$ & $\underset{\pm 1.36}{67.80}$ & $\mathbf{\underset{\pm 2.22}{78.20}}$ & $\underset{\pm  2.01}{19.75}$  & $\underset{\pm 2.36}{32.25}$ & $\underset{\pm 1.76}{79.90}$ & $\mathbf{\underset{\pm 1.97}{80.65}}$ & $\underset{\pm 0.99}{32.55}$ & $\underset{\pm 1.78}{75.30}$ & $\underset{\pm 1.56}{78.05}$ & $\mathbf{\underset{\pm 1.99}{79.50}}$ &$\underset{\pm 1.10}{76.85}$\\
0.5 & $\underset{\pm 4.03}{28.10}$ & $\underset{\pm 2.77}{60.30}$ & $\underset{\pm 3.22}{73.90}$ & $\underset{\pm 2.41 }{ 16.00}$  & $\underset{\pm 1.22}{28.70}$ & $\underset{\pm 1.90}{78.80}$ & $\underset{\pm 1.74}{78.85}$ & $\underset{\pm  0.73}{25.40}$ & $\underset{\pm 1.34}{75.30}$ & $\underset{\pm 0.98}{76.00}$ & $\underset{\pm 0.84}{76.90}$ &$\underset{\pm 1.97}{74.35}$\\
1 & $\underset{\pm 3.43}{32.70}$ & $\underset{\pm 0.38}{58.65}$ & $\underset{\pm 1.11}{67.10}$ & $\underset{\pm  0.72}{ 16.35}$  & $\underset{\pm 2.37}{29.10}$ & $\underset{\pm 2.19}{77.70}$ & $\underset{\pm 1.93}{77.45}$ & $\underset{\pm  1.87}{24.45}$  & $\underset{\pm 1.81}{72.15}$ & $\underset{\pm 1.62}{73.65}$ & $\underset{\pm 1.61}{74.85}$ &$\underset{\pm 1.53}{75.95}$\\
$\sqrt{2}$ & $\underset{\pm 2.71}{32.75}$ & $\underset{\pm 0.72}{59.90}$ & $\underset{\pm 1.81}{65.45}$ & $\underset{\pm  1.52}{ 16.80}$  & $\underset{\pm 1.46}{27.85}$ & $\underset{\pm 2.03}{77.70}$ & $\underset{\pm 1.85}{77.75}$ & $\underset{\pm  0.75}{ 23.85}$  & $\underset{\pm 2.00}{76.05}$ & $\underset{\pm 1.23}{74.70}$ & $\underset{\pm 1.72}{74.65}$ &$\underset{\pm 0.76}{77.25}$\\
- & $\underset{\pm 2.71}{32.75}$ & $\underset{\pm 1.56}{49.60}$ & $\underset{\pm 2.05}{52.00}$ & N/A  & $\underset{\pm 1.46}{27.85}$ & $\underset{\pm 1.67}{75.30}$ & $\underset{\pm 1.93}{76.85}$ & N/A  & $\underset{\pm 2.00}{76.05}$ & $\underset{\pm 1.11}{73.85}$ & $\underset{\pm 1.01}{74.35}$ &  N/A  \\
\hlx{h}
\end{tabular}
\label{tab:mnist}

\vskip 5pt
\caption{Accuracies for posture recognition on unnormalized sets.}
\begin{tabular}{| c ||c c c c | c  c c c | c  c c c|}
\hlx{hv[2-5,6-9, 9-12]} 
&\multicolumn{4}{c|}{\bf Indefinite}& \multicolumn{4}{|c|}{\bf Shift}&  \multicolumn{4}{c|}{\bf KSVM} \\
\hlx{hv}
Threshold & $\emd$ & $\emdhat$ & $\mathit{EMJD}$ & \acs{idk} & $\emd$ & $\emdhat$ & $\mathit{EMJD}$ & \acs{idk} &  $\emd$ & $\emdhat$ & $\mathit{EMJD}$  & \acs{idk}\\
\hlx{vhv}
25 & $\underset{\pm 16.56}{37.20}$ & $\underset{\pm 11.11}{80.87}$ & $\underset{\pm 10.53}{80.53}$  & $\underset{\pm  11.79}{23.82 }$ & $\underset{\pm 15.42}{53.31}$ & $\underset{\pm 11.15}{80.64}$ & $\underset{\pm 10.53}{80.53}$  & $\underset{\pm  11.23}{ 33.69}$  & $\underset{\pm 13.76}{73.00}$ & $\underset{\pm 10.99}{80.67}$ & $\underset{\pm 10.53}{80.53}$  & $\underset{\pm  10.90}{ 69.13}$ \\
50 & $\underset{\pm 18.65}{38.96}$ & $\underset{\pm 12.03}{90.91}$ & $\underset{\pm 12.00}{90.96}$  & $\underset{\pm  12.81}{30.11 }$  & $\underset{\pm 17.87}{42.20}$ & $\underset{\pm 11.76}{91.13}$ & $\underset{\pm 12.00}{90.96}$  & $\underset{\pm 11.57}{ 41.33}$  & $\underset{\pm 13.36}{87.98}$ & $\underset{\pm 12.06}{90.96}$ & $\underset{\pm 12.00}{90.96}$   & $\underset{\pm  11.31}{ 72.69}$  \\
100 & $\underset{\pm 20.22}{32.80}$ & $\underset{\pm 6.37}{95.02}$ & $\underset{\pm 6.63}{94.44}$  & $\underset{\pm  14.02}{ 34.47}$  & $\underset{\pm 16.94}{34.07}$ & $\underset{\pm 6.40}{95.00}$ & $\underset{\pm 6.63}{94.44}$  & $\underset{\pm  13.26}{ 43.56}$  & $\underset{\pm 10.06}{92.93}$ & $\underset{\pm 6.12}{95.00}$ & $\underset{\pm 6.63}{94.44}$  & $\underset{\pm  12.39}{ 73.13}$ \\
150& $\underset{\pm 22.31}{28.96}$ & $\mathbf{\underset{\pm 6.40}{95.47}}$ & $\underset{\pm 6.60}{95.02}$  & $\underset{\pm  13.89}{ 32.78}$  & $\underset{\pm 16.30}{30.69}$ & $\underset{\pm 6.77}{95.00}$ & $\underset{\pm 6.60}{95.02}$  & $\underset{\pm  13.96}{ 39.18}$  & $\underset{\pm 11.92}{91.82}$ & $\underset{\pm 6.54}{95.42}$ & $\underset{\pm 6.60}{95.02}$  & $\underset{\pm  11.60}{ 72.98}$ \\
200 & $\underset{\pm 18.65}{29.73}$ & $\underset{\pm 6.73}{95.09}$ & $\underset{\pm 7.17}{94.31}$  & $\underset{\pm  14.27}{ 36.07}$  & $\underset{\pm 16.82}{30.89}$ & $\underset{\pm 7.20}{94.44}$ & $\underset{\pm 7.22}{94.24}$  & $\underset{\pm  12.93}{ 40.91}$  & $\underset{\pm 8.43}{92.22}$ & $\underset{\pm 7.22}{94.60}$ & $\underset{\pm 7.23}{94.27}$  & $\underset{\pm  11.66}{ 73.09}$ \\
- & $\underset{\pm 18.65}{29.73}$ & $\underset{\pm 5.97}{95.20}$ & $\underset{\pm 6.07}{95.24}$ & N/A & $\underset{\pm 16.82}{30.89}$ & $\mathbf{\underset{\pm 5.69}{95.27}}$ & $\underset{\pm 6.15}{95.09}$  & N/A  & $\underset{\pm 8.43}{92.22}$ & $\mathbf{\underset{\pm 5.77}{95.60}}$ & $\underset{\pm 5.92}{95.58}$  & N/A \\
\hlx{h}
\end{tabular}
\label{tab:posture}
\end{table*}

\subsection{Results and Discussion}
For normalized sets contained in KTH-TIPS and Caltech-101 (Table~\ref{tab:texture}), there is no significant difference between the three kernels. 
The main point we make with these two datasets is that there is no degradation in performance when using \EMDHat{} or \ac{emjd} versus \ac{emd}.
In fact, \ac{emd} and \EMDHat{} are the exact same for any two normalized sets since the difference in mass is zero.
However, for unnormalized sets (Tables~\ref{tab:mnist} and~\ref{tab:posture}), \EMDHat{} and \ac{emjd} are noticeably better than \ac{emd} despite the indefinite kernel techniques.
\ac{ksvm} actually improved \ac{emd}'s accuracy far beyond what was expected, nearly matching \EMDHat{}'s performance (and surpassing it on the highest thresholds for CMNIST).
However, this state-of-the-art indefinite kernel technique was still unable to bridge the difference in all cases, and the results should be balanced by the more computationally practical Shift, which was completely unable to compensate for \ac{emd}'s indefiniteness.

Our experiments on KTH-TIPS and Caltech-101 confirmed the report of Zhang et al.\@~\mycitet{zmls-lfkctoccs} that the \ac{rbf} kernel for \ac{emd} is \posdef{} with this data.
However, computation of \ac{emi} revealed an indefinite kernel matrix, which indicates that only a subset of $u < 0$ from Theorem~\ref{thm:expK} is satisfied and that  Zhang et al.'s selection strategy for $u$ just happens to fall within this subset.
The same behavior was observed for \ac{emjd} on these two datasets.
The ground distance's support for posture recognition and CMNIST, on the other hand, does not consist of normalized vectors.
For posture recognition, we noticed that \ac{emjd} was more likely to yield a \posdef{} \ac{rbf} using the aforementioned selection strategy.
For example, observe that the Shift and \ac{ksvm} results are the same as the indefinite results for certain thresholds, with lower threshelds apparently increasing the likelihood of generating a \posdef{} kernel.
Exploration on normalized sets (not shown) with both CMNIST and posture recognition made this effect more pronounced.

Of special note is the fact that \EMDHat{} and \ac{emjd} yield significant improvements in accuracy even without applying any indefinite kernel technique.
On the posture recognition dataset in particular, the effective results are nearly indistinguishable from Shift and \ac{ksvm}.
For the CMNIST dataset, indefinite \ac{emjd} consistently outperformed the other two kernels and rivaled Shift and \ac{ksvm} at the lowest threshold.
These results indicate that \EMDHat{}, \ac{emjd}, and perhaps the definite-preserving transformation in general have value on their own without additional indefinite kernel methods.

A comparison against the more traditional kernels (Table~\ref{tab:traditional} and the \acs{idk} columns of Tables~\ref{tab:mnist} and~\ref{tab:posture}) only reinforces this conclusion.
Though \ac{idk} performed comparably well on CMNIST with Krein SVM, it performed considerably worse in all other cases and appeared to be insensitive to changes in the threshold.
The \posdef{} kernels performed relatively poorly as well, although the linear and Gaussian kernels performed surprisingly well for posture recognition compared to both \ac{pmk} and \ac{idk}.

In general, one can observe that the threshold has a significant effect on the quality of the classifier.
The highest thresholds, which matched or exceeded the diameter of each dataset's support, did not yield the best observed results for any dataset.
Lower thresholds tended to yield better results (up to a point). 
As the threshold lowers, \ac{emd} becomes a closer approximation to the set symmetric difference and thus more similar to the intersection kernel.
As stated in Section~\ref{sec:emi}, thresholding can be interpreted as a means to induce concavity in the ground distance and make it more similar to the discrete metric.
This explains why the accuracy drops off after a certain minimum threshold (as it becomes too similar to classical intersection to associate slightly different elements) as well as its tendency to improve prior to the drop off.

\section{Conclusion}
In this paper we presented proof that \posdef{} kernels can be derived via Lemma~\ref{thm:defK} from \ac{emd} and are dependent on the ground distance and the space in which it operates.
We set our discussions in the context of set theory, providing motivation for our derivations and an intuitive interpretation of \ac{emd}'s value, namely as a generalization of otherwise binary set operations.
In doing so, we generalized \EMDHat{} for kernels.
We also proposed a \posdef{}-preserving transformation that normalizes a kernel's values and showed that the Jaccard index is simply the transformation of the intersection kernel.
As a corollary, the biotope transform was shown to preserve \cnegdef{} \ifexpanded as well asmetric \fi  properties.
Finally, we provided the first assessment of \EMDHat{} in a kernel setting and showed that it and its biotope transform \ac{emjd} achieve superior accuracy over \ac{emd} on experiments with unnormalized sets and a state-of-the-art indefinite kernel technique.
Indeed, we showed that an indefinite kernel technique may not even be necessary.
\ac{emjd} was found to have more favorable numerical properties than \EMDHat{}.

Our work raised some open questions.
We do not know whether thresholding a distance preserves \cnegdef{} properties as it does metric properties~\mycite{pw-fremd}.
Our experiments did not contradict the hypothesis.
The choice of the optimal threshold is also open.
One could always tune the threshold via cross-validation, but we suspect that a decent approximation to the optimal threshold would be to use the average or median distance between all points.
Using no threshold or choosing $p$ to be closer than the threshold is also an option to consider as the posture recognition experiments demonstrate.

One unexpected result was \ac{ksvm}'s poor performance on CMNIST relative to Shift for \EMDHat{} and \ac{emjd}.
This result is at odds with the expectation that \ac{ksvm} should be at least as good as other indefinite kernel techniques, which is fairly well justified in its introductory article~\mycite{lco-lsvmks}.
We noted that the eigenspectrum of a CMNIST kernel matrix was much less concentrated than those for the other datasets.
Whereas performing a partial decomposition with the 50 highest magnitude eigenvalues was typically sufficient to retain approximately $95\%$ of the spectrum's total magnitude on the other datasets, as many as $1200$ eigenvalues were required to achieve the same \ifexpanded preservation of the spectrum \else retention \fi on CMNIST.
In fact, the results reported in Table~\ref{tab:mnist} are from a complete decomposition.
Additional research may be required to determine if this is due to a peculiarity unique to CMNIST or some property of \ac{ksvm}.


%

\appendices
\section{Nested Transformations}\label{sec:nestedT}
This Appendix gives proofs of Propositions~\ref{thm:pdpTLimits} and~\ref{thm:inducedPDPT} and introduces Proposition~\ref{thm:closedFormKTN}.
In Appendix~\ref{sec:transportationL2Hypersphere}, we use Proposition~\ref{thm:inducedPDPT} to find ground distances for which \ac{emd} is \cnegdef{}.
\begin{proof}[Proof of Proposition~\ref{thm:pdpTLimits}]
By~\refreq{pdpT} and~\refreq{fixedPointKT}, we trivially see that if $x=y$, $\ptransform{K}{n}(x,y)=1$ for all $n$.
If $x \ne y$, , we deduce from~\refreq{posDefTriangle} that $\ptransform{K}{}(x,y)< 1$ and 
\begin{equation}
\frac{\ptransform{K}{n+1}(x,y)}{\ptransform{K}{n}(x,y)}=\frac{1}{2-\ptransform{K}{n}(x,y)}< 1,
\end{equation}
\begin{equation}
\frac{\ptransform{K}{n+1}(x,y)}{\ptransform{K}{n}(x,y)} < \begin{cases}
\frac{1}{2} & \text{if } \ptransform{K}{n}(x,y) < 0,\\
 \frac{1}{2-\ptransform{K}{}(x,y)} & \text{if } \ptransform{K}{n}(x,y) > 0.
\end{cases}
\end{equation}
Noting that the geometric sequence $a_n=|\ptransform{K}{}(x,y)/r^{n-1}|$, where ${r=2-\ptransform{K}{}(x,y)}$ if $\ptransform{K}{}(x,y) >0$ and $r=2$ otherwise, converges to 0 as $n\to\infty$, we conclude that the sequence $b_n=|\ptransform{K}{n}(x,y)|$, which is bounded below by 0 and above by $a_n$ for $n \geq 1$, also converges to 0 by the squeeze theorem~\mycite{s-cet}.
The proposition follows.
\end{proof}

\begin{proposition}
For any given $K:X\times X \to \mathbb{R}$,
\begin{equation}
\small
\ptransform{K}{n}(x,y) = \frac{K(x,y)}{2^{n-1}[K(x,x)+K(y,y)]-(2^n-1)K(x,y)}.
\end{equation}
\label{thm:closedFormKTN}
\end{proposition}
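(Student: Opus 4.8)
The plan is to prove the identity by induction on $n$, using the one-step recurrence \eqref{eq:fixedPointKT} as the engine of the inductive step. The base case $n=1$ is immediate: substituting $n=1$ into the claimed expression gives $\frac{K(x,y)}{2^{0}[K(x,x)+K(y,y)]-(2^{1}-1)K(x,y)}=\frac{K(x,y)}{K(x,x)+K(y,y)-K(x,y)}$, which is exactly the definition \eqref{eq:pdpT} of $\ptransform{K}{1}=\ptransform{K}{}$. So the only real content is propagating the formula from $n$ to $n+1$.

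Assume the formula holds for some $n\geq 1$, and abbreviate $\ptransform{K}{n}(x,y)=K(x,y)/D_n$ with $D_n:=2^{n-1}[K(x,x)+K(y,y)]-(2^{n}-1)K(x,y)$. Since $\ptransform{K}{m}(x,x)=1$ for every $m\geq 1$ by \eqref{eq:pdpT}, the nested transformation obeys the recurrence \eqref{eq:fixedPointKT}, so
\[
\ptransform{K}{n+1}(x,y)=\frac{\ptransform{K}{n}(x,y)}{2-\ptransform{K}{n}(x,y)}=\frac{K(x,y)/D_n}{\,2-K(x,y)/D_n\,}=\frac{K(x,y)}{2D_n-K(x,y)}.
\]
It then remains only to verify the elementary algebraic identity $2D_n-K(x,y)=D_{n+1}$: the coefficient of $[K(x,x)+K(y,y)]$ becomes $2\cdot 2^{n-1}=2^{n}$, and the coefficient of $K(x,y)$ becomes $-2(2^{n}-1)-1=-(2^{n+1}-1)$, which is precisely $D_{n+1}$ (the value of $D_m$ at $m=n+1$). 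This closes the induction.

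I do not expect any genuine obstacle; the one point worth flagging is well-definedness, since the statement assumes nothing about $K$. I would note at the outset that the nesting \eqref{eq:nestedTransformation}, and hence the formula, is only meaningful when no intermediate denominator vanishes — equivalently when $D_m\neq 0$ for $1\leq m\leq n$ — and that the convention $\ptransform{K}{}(x,y)=1$ when $K(x,x)=K(y,y)=0$ is consistent with the formula in that degenerate case, as numerator and denominator both collapse to $K(x,y)$. For the \posdef{} kernels that motivate the result, \eqref{eq:posDefTriangle} forces $D_m>0$ at every stage, so the closed form holds unconditionally there. As an alternative to the induction one could observe that $t\mapsto t/(2-t)$ is a M\"obius map with matrix $\left[\begin{smallmatrix}1&0\\-1&2\end{smallmatrix}\right]$, whose eigenvalues $1$ and $2$ make its iterates immediate; but the direct induction is shorter and cleaner.
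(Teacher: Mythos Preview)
Your proposal is correct and follows essentially the same approach as the paper: induction on $n$ with base case $n=1$ reducing to \eqref{eq:pdpT} and the inductive step driven by the recurrence \eqref{eq:fixedPointKT}. The paper indexes the induction from $n-1$ to $n$ rather than $n$ to $n+1$, and omits your remarks on well-definedness and the M\"obius alternative, but the argument is otherwise identical.
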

\begin{proof}
We prove this via induction.
As a base case, note that $n=1$ yields~\refreq{pdpT}.
Now suppose that the proposition holds for some $n-1$, $n > 1$.
Then by~\refreq{fixedPointKT},
\begin{align}
\small
&\ptransform{K}{n}(x,y) = \frac{\ptransform{K}{n-1}(x,y)}{2-\ptransform{K}{n-1}(x,y)}\\
&=\frac{\frac{K(x,y)}{2^{n-2}[K(x,x)+K(y,y)]-(2^{n-1}-1)K(x,y)}}{2-\frac{K(x,y)}{2^{n-2}[K(x,x)+K(y,y)]-(2^{n-1}-1)K(x,y)}}\\
&=\frac{K(x,y)}{2^{n-1}[K(x,x)+K(y,y)]-(2^{n}-2)K(x,y)-K(x,y)}\\
&=\frac{K(x,y)}{2^{n-1}[K(x,x)+K(y,y)]-(2^{n}-1)K(x,y)}.
\end{align}
\end{proof}
Note that although we focused on $n\geq1$, $n$ could be considered a continuous hyperparameter within the range $(-\infty, \infty)$.
If $n=0$, then we obtain a generalization of the F measure (as interpreted as a kernel by Ralaivola et al.\@~\mycitet{rssb-gkci})
\begin{equation}
\ptransform{K}{0}(x,y) = \frac{2K(x,y)}{K(x,x)+K(y,y)}.
\end{equation}
\posdef{}-ness is not guaranteed for $n < 0$.

\begin{proof}[Proof of Proposition~\ref{thm:inducedPDPT}]
Consider the kernel matrix $G_K^{(n)}=[\ptransform{K}{n}(x_i,x_j)]$ for some selection of elements $x_1,\dots,x_n \in X$ with $1 \leq i,j \leq n$.
Since the definition of a \posdef{} kernel requires only distinct elements for \refreq{defKEq}, we may without loss of generality assume that each element is distinct, \ie $i \ne j \implies x_i \ne x_j$.
We show that $\lim_{n\to\infty}G_K^{(n)} = \mathbf{I}$, and hence its eigenvalues each converge to 1, implying that the minimum eigenvalue must be greater than or equal to 0 for some $n_0$ and thus $G_K^{(n)}$ is \posdef{} for $n \geq n_0$.

We show this by proving a stronger version of~\refreq{pdpTLimits}.
Note that since the considered kernel $K$ is not necessarily \posdef{}, $\ptransform{K}{}$ is not necessarily in the interval $[-1/3, 1]$.
However, $\ptransform{K}{}(x,x) = 1$ for any $x \in X$.
More importantly, the derivation of~\refreq{pdpTLimits} holds if $\ptransform{K}{}(x,y) < 1$, regardless of the magnitude $|\ptransform{K}{}(x,y)|$.
Therefore, it is sufficient to show that $\ptransform{K}{n}(x,y)$ eventually becomes negative for any $x,y$ such that $\ptransform{K}{}(x,y) > 1$.
We consider three cases: 
\begin{enumerate}
\item $\ptransform{K}{}(x,y) > 2$. 

Note $\ptransform{K}{2}(x,y) < 0$ since the denominator of~\refreq{fixedPointKT} after substitution is negative but the numerator is positive.
\item $\ptransform{K}{}(x,y) = 2$.

Apply Proposition~\ref{thm:closedFormKTN} to $\ptransform{K}{}$ with $n=2$ to yield $\ptransform{\ptransform{K}{}}{2}(x,y) = \ptransform{K}{3}(x,y) = -1.$
\item $\ptransform{K}{}(x,y) \in  (1, 2)$.

Note that $\ptransform{K}{2}(x,y) > \ptransform{K}{}(x,y) $ since the denominator of~\refreq{fixedPointKT} after subtitution will be less than one.
Furthermore, the denominator decreases with each iteration, which implies that $\ptransform{K}{n}(x,y)$ will increase until it falls within one of the previous two categories.
\end{enumerate}
Thus $\lim_{n\to \infty}\ptransform{K}{n}(x_i,x_j) = 0$ for $x_i \ne x_j$ and $\lim_{n\to\infty}G_K^{(n)} = \mathbf{I}$.
\end{proof}


\section{Transportation on the Real Line}\label{sec:transportationRealLine}
Consider the space of probability distributions on the real line $\mathbb{R}$.
Let $D: \mathbb{R} \times \mathbb{R} \to \mathbb{R}_0^+$ be a convex, non-negative symmetric function that takes the form $D(a,b) = h(a-b)$, where $h:\mathbb{R}\to\mathbb{R}^+_0$.
If $D$ is \cnegdef{}, then one can show that \ac{emd} equipped with $D$ is \cnegdef{} as well.
A well known result~\mycite{rdg-tdc} states that \ac{emd} between two probability distributions $u, v \in \mathcal{P}(\mathbb{R})$ with a ground distance such as $D$ can be written
\begin{equation}
\emd(u,v) = \int_0^1 D(U^{-1}(s),V^{-1}(s))\dif s,
\end{equation}
where $U^{-1}$ and $V^{-1}$ are the inverse cumulative distribution functions of $u$ and $v$.
In essence, the $i$-th point in ascending order of one distribution maps to the $i$-th point of the other.
Since \ac{emd} in this form is clearly just the summation of \cnegdef{} functions, then \ac{emd} must also be \cnegdef{}.

Kolouri et al.\@~\mycitet{kzr-swkpd} use $D(a,b) = (a-b)^2$ to show that the \textit{sliced Wasserstein kernel}, which is calculated between distributions in $\mathbb{R}^d$ via one-dimensional projections, is \posdef{}.
One may also consider the following special case to reveal similarities to another min-like kernel, the Brownian bridge product kernel~\mycite{cfm-uhssvduibbk,szh-rkbsl1n},
\begin{equation}
K_B(x,y) = \min\{x,y\}-xy.
\end{equation}
Suppose the ground distance $D$ is supported by two points $p_1,p_2 \in \mathbb{R}$, and without loss of generality assume $p_1=-p_2=1$.
Assuming $u,v \in \mathcal{P}(\{p_1,p_2\})$, let $\chi_u(p_1) = x$ and $\chi_v(p_1) = y$ so that $\chi_u(p_2) = 1-x$ and $\chi_v(p_2) = 1-y$.
Then the optimal flow $f^*(p_i,p_i) = \min\{\chi_u(p_i),\chi_v(p_i)\}$ for $i \in \{1,2\}$, and $f^*(p_1,p_2)+f^*(p_2,p_1) = 1-f^*(p_1,p_1)-f^*(p_2,p_2)$.
Choosing the sink $p=0$ in \refreq{emi}, we can determine  that $\emi_0$ is the sum of two Brownian bridge product kernels and a similarly structured term:
\ifIEEE
\begin{equation}
\begin{split}
\emi_0(u, v) &= \sum_{i,j=1}^2 f^*(p_i,p_j)p_ip_j\\
& =   2(\min\{x,y\}+\min\{1-x,1-y\})-1\\
&= K_B(x,y) +2K_B(1-x,1-y)\\
&\quad{}+ \min\{x,y\} -x(1-y)\\
&\quad{}-y(1-x)+(1-x)(1-y).
\end{split}
\end{equation}
\else
\begin{equation}
\begin{split}
\emi_0(u, v) = \sum_{i,j=1}^2 f^*(p_i,p_j)p_ip_j & =   2(\min\{x,y\}+\min\{1-x,1-y\})-1\\
&= K_B(x,y) +2K_B(1-x,1-y)\\
&\quad{}+ \min\{x,y\} -x(1-y)\\
&\quad{}-y(1-x)+(1-x)(1-y).
\end{split}
\end{equation}
\fi

\section{Transportation on the Circle}\label{sec:transportationCircle}
Transportation on the circle is similar to transportation on the real line.
In fact, one simply has to find an optimal point at which to cut the circle prior to treating it like the real line.
In this case, the geodesic distance (\ie length of arc or angle) is used to compare points.
If the points $x,y$ are linearly indexed on $S^1$, the circle with radius 1, then
\ifexpanded
\begin{equation}
D(x,y) = \min\{|x-y|,2\pi-|x-y|\},
\end{equation}
or equivalently
\fi
\begin{equation}
D(x,y) = \arccos{\left(\begin{bmatrix}\cos(x) &\sin(x)\end{bmatrix}\begin{bmatrix}\cos(y)& \sin(y)\end{bmatrix}^\intercal\right)},
\end{equation}
which is provably \cnegdef{} by an infinite series expansion~\mycite{hr-akmlhdc}.
With the given ground distance and probability distributions $u, v \in \mathcal{P}(S^1)$, it can be shown that
\begin{equation}
\emd(u,v) = \|U-V-\alpha\|_1 = \int_0^{2\pi}|U(s)-V(s)-\alpha|\dif s
\label{eq:transportCircle}
\end{equation}
where $U$ and $V$ are cumulative distribution functions and $\alpha$ is the weighted median of $U-V$~\mycite{dss-ftomcc,rdg-tdc}.
Surprisingly, one can empirically show that for arbitrary $u$ and $v$, \ac{emd} is not \cnegdef{} on the circle despite its similarity to the line.

The reason that \ac{emd} is not \cnegdef{} on the circle is due to the use of the median in \refreq{transportCircle}.
If we approximate the median with the mean (guaranteed by Jensen's inequality to be within 1 standard deviation~\mycite{m-acooc}), then we obtain a \cnegdef{} approximation of \ac{emd}.
Note that substituting the mean in \refreq{transportCircle} yields
\begin{equation}
\footnotesize
\begin{split}
\emd(u,v) 
&\approx \int_0^{2\pi}\left|U(s)-\int_0^{2\pi}U(t)\dif t - \left(V(s)-\int_0^{2\pi}V(t)\dif t\right)\right|\dif s.
\end{split}
\end{equation}
which is a sum of \cnegdef{} kernels.
If the median can be expressed  by a function $h$ as $\alpha = h(u)-h(v)$ (perhaps only for certain families of distributions), then \ac{emd} is \cnegdef{}.

\section{Transportation on the $L_2$ Hypersphere} \label{sec:transportationL2Hypersphere}
Consider the class of ground distances of the form $\beta-K$, where $\beta$ is a positive constant and $K$ is \posdef{}.
This class of ground distances coincides with those implied by \cnegdef{} \EMDHat[\alpha] since we may note that Pele and Werman's \EMDHat[\alpha] is a special case formulation of \refreq{generalizedEMDHat} that uses $D(a,p)=\alpha\max\{D(a,b)\}$ for every $a,b \in X$.
If a point $p$ can be found or created such that $D(a,p)=\beta$ for each $a \in X \setminus \{p\}$ and $D$ is \cnegdef{}, then by Lemma~\ref{thm:defK} we can conclude that $D$ is of the form $\beta-K$ (in this case, $\beta=2\alpha\max D(a,b)-D(p,p)$).
A characterization of kernels of this form is given by Berg et al.\@~\mycitet{bcp-haos}.
If we add the condition that $D$ satisfies identity of indiscernibles, then a geometric interpretation of $D$ is readily forthcoming.
In particular, the image $\phi(X)$ from $K$'s feature mapping lies on the hypersphere of radius $\sqrt{\beta}$ in a Hilbert space centered on the point $\phi(p) = \vec{0}$.
This follows from the fact that $K(a,a) = \beta$ as a consequence of $D(a,a)=0$.
In other words, this subclass is comprised of normalized kernels and embeds into squared $L_2$ on the hypersphere.

\ifexpanded
Ground distances of this form have already appeared in the literature.
Rabin et al.\@~\mycitet{rdg-tdc} considered geodesic distances on the circle and used them for color image retrieval and color transfer between images.
The geodesic distance is equivalent to the angle between two vectors representing points on the circle, which can be computed from the arc-cosine of their dot product, which is of the form $\beta-K$ where $\beta=\pi/2$ and $K$ is the arc-sine of their dot product.
Zhang et al.\@~\mycitet{zmls-lfkctoccs} used a Euclidean ground distance in a high-dimensional space to compare SIFT descriptors for object and texture recognition in images.
However, they normalized the vectors comprising each set's support, effectively restricting their computations to distance between points on the hypersphere.
This study provided empirical evidence that \ac{emd} tends to be \cnegdef{} for this restricted case since no violations were found.

\else
Ground distances of this form have already appeared in the literature~\mycitet{rdg-tdc} and empirical evidence~\cite{zmls-lfkctoccs} suggests that \ac{emd} tends to be \cnegdef{} for this restricted case since no violations were found.
\fi
However, the result of Naor and Schechtman~\mycitet{ns-peinil1} states that \ac{emd} is indefinite on the $\{0,1\}^2 \subset \mathbb{R}^2$ grid with a Euclidean ground distance.
We can thus conclude that \ac{emd} is actually not \cnegdef{} for ground distances of the form $\beta-K$ in general since one can find a subspace of the hypersphere isometric to $\{0,1\}^2$.
Consequently, any ground distance must necessarily not include subspaces isometric to $\{0,1\}^2$ if there is any hope for \ac{emd} to be \cnegdef{}.
We do have one example, though, of a ground distance of this form---the discrete metric---where \ac{emd} is \cnegdef{}, and we hypothesize that ground distances close to discrete in form are also sufficient.
More formally, we hypothesize that there exists $\epsilon > 0$  such that if $K(x,x)=1$ for all $x \in X$ and $K(x,y) < \epsilon$ for all $x \ne y$, then \ac{emi} equipped with $K$ is \posdef{}.
We will now illustrate this notion with a method that transforms a ground distance into a nearly discrete form in order to yield \cnegdef{} \ac{emd}.

Under the following assumptions about the distribution of the sets under consideration for use with \ac{emd}, we may use Proposition~\ref{thm:inducedPDPT} to show that there exists a transformed ground distance of the form $\beta-K$ that yields \cnegdef{} \ac{emd}.
The assumptions that we make are that the sets are discrete, the collection of sets is finite, and that each pair of sets is disjoint.
Note that these assumptions form sufficient but not necessary conditions for the strategy that follows.
We also assume that $K$ strictly satisfies \refreq{posDefTriangle} for different $x,y$ but is not necessarily \posdef{}.
One may then infer that there exists a number $n_0$ for which $\ptransform{K}{n}$ is \posdef{} for $n \geq n_0$.

Let $X_1, X_2, \dots, X_n \in \mathcal{F}(X)$ be subsets of $X$ discretely supported with support cardinalities $s_i$, $i \in [1,n]$.
Let $K_i^j$ be the $s_{i} \times s_{j}$ kernel matrix computed between the elements of $X_i$ and $X_j$, and let 
\begin{equation}
F_i^j = \underset{f}{\arg\max}~\Vec{K_i^j}^\intercal\Vec{f}
\end{equation} 
be the $s_i \times s_j$ maximum-cost maximum-flow matrix computed between $X_i$ and $X_j$, where $\Vec{M}$ is the vectorization of the matrix $M$ made by concatenating columns.
Note that 
\begin{equation}
\emi(X_i,X_j) = \Vec{K_i^j}^\intercal\Vec{F_i^j}.
\end{equation}
Let $H_i^j$ be the Schur product of $F_i^j$ and $K_i^j$. 
Note that $H_i^i$ is diagonal for each $i$ as a consequence of \refreq{posDefTriangle}.
Additionally,
$\uflow_j^i = {\uflow_i^j}^\intercal$,
and
\begin{equation}
\emi(X_i,X_j) =\sum_{h=1}^{s_i}\sum_{k=1}^{s_j} {H_i^j}_{h,k}.
\end{equation}
By an application of the derived subsets kernel~\mycite{stc-kmpa}, we may deduce that \ac{emi} is \posdef{} if the kernel matrix $G_\uflow$, where the $(i,j)$-th block
$
G_\uflow(i,j) = \begin{bmatrix}\uflow_i^j
\end{bmatrix},
$
is \posdef{},
\ie if $\uflow: X^* \times X^* \to \mathbb{R}$ is a \posdef{} kernel, where $X^* = \bigcup_{i=1}^nX_i$.

There are several ways one may proceed to obtain \posdef{} \ac{emi}. 
One may transform $K$ and either keep or recompute the flow.
One may also transform $H$ or \ac{emi} itself.
Since the sets are disjoint and $K$ satisfies the conditions of Proposition~\ref{thm:inducedPDPT}, then $H$ and \ac{emi} satisfy the same conditions.
By Proposition~\ref{thm:inducedPDPT}, repeated transformation of $H$ or \ac{emi} will eventually become \posdef{}.
Transforming only $K$ is slightly more complicated to analyze, but one may note by similar reasoning used in the proof of Proposition~\ref{thm:inducedPDPT} that $G_H$ must eventually become \posdef{} since it converges to a diagonal matrix.
Note that we do not endorse this scheme for use with any ground distance, and we hypothesize that it is most appropriate for ground distances that are already normalized, \ie of the form $\beta-K$.

\ifIEEE
	\ifCLASSOPTIONcompsoc
	  \section*{Acknowledgments}
	\else
	  \section*{Acknowledgment}
	\fi
\else
	\acks
\fi

This work was funded in part by the Louisiana Space Consortium (LaSPACE) GSRA grant 89631 and NASA grant NNX10AI40H.

\ifIEEE
\ifCLASSOPTIONcaptionsoff
  \newpage
\fi
\fi



\ifIEEE
\bibliographystyle{myIEEEtran}
\else
\bibliographystyle{abbrvnat}
\fi
\bibliography{references}
%
%
%

\clearpage
\iffalse
We already know from Proposition~\ref{thm:emdIsSetDiff} that \ac{emd} with the discrete metric is equal to the symmetric difference.
In addition, we know that set intersection is \posdef{}~\mycite{btb-ghikir}.
By Lemma~\ref{thm:defK} with $x_0=\emptyset$ and $K$ as set intersection, we can conclude that the symmetric difference (as $D$) is \cnegdef{}.
As a result, we have shown that $\emd_{\text{0-1}}$ is \cnegdef{}.
More generally,
\begin{align}
\begin{split}
\emi_\alpha(A,B) =&\emdhat_{\alpha/2}(A,\emptyset)+\emdhat_{\alpha/2}(B,\emptyset)\\
&-\emdhat_{\alpha/2}(A,B)-\emdhat_{\alpha/2}(\emptyset,\emptyset),
\end{split}
\label{eq:defEMI}
\end{align}
and thus by Lemma~\ref{thm:defK} we can conclude that \EMDHat[\alpha/2] and SEMD$_{\alpha}$ are conditionally negative definite if EMI$_\alpha$ is \posdef{} (\ac{emd} can only be conditionally negative definite for sets of the same size).
Consequently, by Theorem~\ref{thm:expK} we have shown that \ac{emd} can in fact yield positive definite \ac{rbf} kernels.
\ifexpanded
Does our prior intuition regarding property inheritance match with these results?
The following proposition answers in the affirmative and is expanded in the proposition immediately after it.
\begin{proposition}
The discrete metric is conditionally negative definite.
\label{thm:discreteDefinite}
\end{proposition}
\begin{proof}
Let $c_1, \dots, c_n$ be a set of real numbers such that $\sum_{i=1}^nc_i=0$.
Let $x_1,\dots, x_n$ be members of an arbitrary space $X$ endowed with the discrete metric.
Then
\begin{equation}
\begin{split}
\sum_{i,j=1}^nc_ic_j\discrete(x_i,x_j)&=\sum_{i,j=1}^nc_ic_j-\sum_{i=1}^nc_i^2 \\
&=\left(\sum_{i=1}^nc_i\right)^2-\sum_{i=1}^nc_i^2\\
&=-\sum_{i=1}^nc_i^2\\
&\leq 0.
\end{split}
\end{equation}
Thus, the discrete metric is conditionally negative definite.
\end{proof}
\else
It is trivial to verify that the discrete metric is conditionally negative definite.
\fi

We have thus provided evidence that \ac{emd} can be conditionally negative definite, and the definiteness of the ground distance plays a significant role in determining this.
However, the discrete metric is only one data point and does not on its own suggest a trend nor imply that its case is not unique.
We will therefore present results for two classes of ground distances: conditionally negative definite kernels $D: X \times X \to \mathbb{R}$ of the form $g(x)+g(y)$ and the form $\beta-K$, where $g:X \to \mathbb{R}$ is an arbitrary function and $K:X \times X \to \mathbb{R}$ is a positive definite kernel.
Before we can propose relevant theorems, however, we must first introduce a new definition of \EMDHat[].
Whereas \EMDHat[\alpha] was introduced with the assumption of a thresholded ground distance, the definition we propose here has no such limitation.
Instead of stating that any unmapped mass computed from $\emd(A,B)$ must belong to $A \setminus B$ (assuming $A$ is larger) and choosing a threshold that determines the value of a single unmatched element, we choose to transport the excess mass to some sink $p \in X$.
To express this transportation of excess mass, we define
\begin{equation}
\begin{split}
\emdnot_{p}(A,B) =& \sum_{a \in A}\left(f_A(a)-\sum_{b \in B}f(a,b)\right)D(a,p)\\
&+ \sum_{b \in B}\left(f_B(b)-\sum_{a \in A}f(a,b)\right)D(b,p),
\end{split}
\end{equation}
where both series are shown to emphasize the symmetry of \EMDNot[p] even though at least one must be zero due to \ac{emd}'s constraints.
Strictly speaking $p$ does not have to be an element of $X$ (in which case the cost of transporting mass to $p$ could not be expressed by $D$).
However, it is convenient for the rest of this section to assume so unless otherwise noted.
We can thus define
\begin{align}
\begin{split}
\emdhat_{p}(A,B) &= \emd(A,B) + \emdnot_p(A,B).
\end{split}
\end{align}
Using \refreq{defEMI} as a guide, we can also define
\begin{align}
\begin{split}
\emi_p(A,B) = & \emdhat_p(A, \emptyset)+\emdhat_p(B,\emptyset)-\emdhat_p(A,B)\\
= &\sum_{a \in A}\sum_{b \in B}f(a,b)\left[D(a,p)+D(b,p)\right]\\
&-\emd(A,B).
\end{split}
\end{align}
We are thus prepared to present the following proposition, which proves that for all ground distances of the form $g(x)+g(y)$, \ac{emd} is conditionally negative definite (for sets of the same size).

\begin{proposition}
If there exists a function $g: X \to \mathbb{R}$ such that the ground distance $D(x,y) = g(x)+g(y)$, then \ac{emd} is conditionally negative definite on $\mathcal{P}(X)$ and EMI$_p$ is positive definite on $\mathcal{F}(X)$ and any choice of $p$ such that $g(p)\geq 0$.
\label{thm:constantEMD}
\end{proposition}
\begin{proof}
Let $f(a,b)$ be the minimum-cost maximum flow between $A$ and $B$.
By definition, 
\begin{equation}
\begin{split}
\emd(A,B) &= \sum_{a \in A}\sum_{b \in B}f(a,b)D(a,b)\\
&= \sum_{a \in A}\sum_{b \in B}f(a,b)\left[g(a)+g(b)\right]\\
&=\sum_{a \in A}\sum_{b \in B}f(a,b)g(a)+\sum_{a \in A}\sum_{b \in B}f(a,b)g(b).
\end{split}
\label{eq:constantFlow1}
\end{equation}
Applying constraints \refreq{outFlow}, \refreq{inFlow}, and \refreq{totalflow} with the fact that $\mu(A)=\mu(B)$ yields
\begin{equation}
\emd(A,B) = \sum_{a \in A}f_A(a)g(a)+\sum_{b \in B}f_B(b)g(b).
\label{eq:constantFlow2}
\end{equation}
In other words, 
\begin{equation}
\emd(A,B) = G(A)+G(B),
\label{eq:emdInherits}
\end{equation}
where 
\begin{equation}
G(A) = \sum_{a \in A}f_A(a)g(a).
\end{equation}
By Lemma~\ref{thm:constantK}, \ac{emd} is conditionally negative definite.

Now we shall show that EMI$_p$ is positive definite for $g(p) \geq 0$ by showing that it reduces to the min-kernel.
\begin{equation}
\begin{split}
\emi_p(A,B)=&\sum_{a \in A}\sum_{b \in B}f(a,b)\left[D(a,p)+D(b,p)\right]\\
&-\emd(A,B)\\
=&\sum_{a \in A}\sum_{b \in B}f(a,b)\left[g(a)+g(b)+2g(p)\right]\\
&- \sum_{a \in A}\sum_{b \in B}f(a,b)\left[g(a)+g(b)\right]\\
=&\sum_{a \in A}\sum_{b \in B}f(a,b)2g(p)
\end{split}
\end{equation}
Applying constraint \refreq{totalflow} yields
\begin{equation}
\emi_p(A,B)=2g(p)\min\left\{\mu(A),\mu(B)\right\}.
\end{equation}
Since the min-kernel is known to be positive definite and $g(p) \geq 0$, we can deduce that EMI$_p$ is positive definite on $\mathcal{F}(X)$.
\end{proof}
In \refreq{emdInherits}, \ac{emd} once again exhibits its tendency to inherit properties from the ground distance.
We may also note that EMI$_p$ reduces to just the min-kernel scaled by a function of $p$.
We may also use this as an example of a situation where $p$ does not have to be a member of $X$.
Suppose $f(x) \leq 0$ for each $x \in X$.
The proposition implies that we cannot find a positive definite EMI$_x$.
We may however arbitrarily define a hypothetical element $p$ and augmented function $h: (X \cup \left\{p\right\}) \to \mathbb{R}$ such that $h(p) > 0$ and $h(x)=g(x)$ for $x \ne p$.
The augmented function $h$ satisfies the conditions of the proposition, and EMI$_p$ will be positive definite.
Alternatively, of course, one could simply negate EMI$_x$.
Next we note a general result for \EMDHat[] before noting its implications for ground distances of the form $\beta-K$.

\begin{proposition}
If \ac{emd} is conditionally negative definite on $\mathcal{P}(X)$ and ground distance $D: X \times X \to \mathbb{R}$ and there exists $x_t \in X$ such that $D(x,x_t) \geq D(x,y)$ for all $x \in X \setminus \{x_t\}, y \in X$, then $\emdhat_{x_t}$ is conditionally negative definite on $\mathcal{F}(X)$.
\label{thm:extremePointEMD}
\end{proposition}
\begin{proof}
By definition, 
\begin{equation}
\emdhat_{x_t}(A,B) = \emd(A,B)+\emdnot_{x_t}(A,B).
\end{equation}
If $\mu(A)=\mu(B)$, then $\emdnot(A,B) = 0$.
Let $X_1,\dots, X_n$ be subsets of $X$, $S = \underset{i}{\max} \left\{\mu(X_i)\right\}$, and $\overline{X}_1, \dots, \overline{X}_n$ be singletons satisfying
\begin{align}
\supp\left(\overline{X}_i\right) &= \{x_t\}\\
f_{\overline{X}_i}(x_t) &= S-\mu\left(X_i\right).
\end{align}
We can create equal mass sets $X_i^\prime$ by adding $X_i$ and $\overline{X}_i$, \ie 
\begin{equation}
X^\prime_i = X_i+\overline{X}_i.
\end{equation}
Now consider $\emd(X_i^\prime, X_j^\prime)$ and its minimum-cost maximum flow.
Since $D(x,x_t) \geq D(x,y)$ for all $x \ne x_t$, the minimum-cost flow between $X_i \subseteq X^\prime_i$ and $X_j \subseteq X^\prime_j$ is the same as that computed for $\emd(X_i, X_j)$.
If the flows were not the same, we would arrive at a contradiction since this would imply $D(x,x_t) < D(x,y)$ for some $x, y$.
Any mass not mapped by the flow of $\emd(X_i, X_j)$ must consequently be transported to $x_t$ at cost
\begin{equation}
\begin{split}
C(X_i^\prime, X_j^\prime) =& \sum_{x \in X_i}\left(f_{X_i}(x)-\sum_{y \in X_j}f(x,y)\right)D(x,x_t)\\
&+\sum_{y \in X_j}\left(f_{X_j}(y)-\sum_{x \in X_i}f(x,y)\right)D(y,x_t)\\
&+\overline{C}(X_i^\prime, X_j^\prime),
\end{split}
\end{equation}
where
\begin{equation}
\overline{C}(X_i^\prime, X_j^\prime) = \min\left\{f_{\overline{X}_i}(x_t), f_{\overline{X}_j}(x_t)\right\}D(x_t,x_t).
\end{equation}
Thus, 
\begin{equation}
\emd(X_i^\prime, X_j^\prime) = \emd(X_i, X_j)+C(X_i^\prime, X_j^\prime).
\end{equation}
We may also note that
\begin{equation}
C(X_i^\prime, X_j^\prime) =\emdnot_{x_t}(X_i,X_j) + \overline{C}(X_i^\prime, X_j^\prime).
\end{equation}
In other words,
\begin{equation}
\emd(X_i^\prime, X_j^\prime) =\emdhat_{x_t}(X_i,X_j)+\overline{C}(X_i^\prime, X_j^\prime),
\end{equation}
 and consequently,
\begin{equation}
\emdhat_{x_t}(X_i,X_j) =\emd(X_i^\prime, X_j^\prime)-\overline{C}(X_i^\prime, X_j^\prime).
\end{equation}
Since $\mu(X_i^\prime)=S$ for all $i$, $\emd(X_i^\prime, X_j^\prime)$ is conditionally negative definite by hypothesis.
In addition, $\overline{C}$ is equivalent to the min-kernel and hence positive definite.
The opposite, $-\overline{C}$, is (conditionally) negative definite.
Therefore, as it is the sum of two conditionally negative definite kernels, $\emdhat_{x_t}$ is conditionally negative definite.
\end{proof}

The proposition relies upon the assumption that $x_t \in X$.
In fact, if one simply augmented every set with mass located at an arbitrary $x \in X$ until each set was the same size and computed \ac{emd} only between the augmented sets, then one would trivially have a conditionally negative definite version of \ac{emd} for sets of arbitrary mass.
However, this would not be an accurate reflection of \ac{emd} between the unnormalized sets unless the conditions of Proposition~\ref{thm:extremePointEMD} were met. 
Computing \EMDHat{} as proposed, wherein the excess mass is transported separately to an external sink, allows the original value of \ac{emd} to be used rather than compute a new flow that might distort the relationships between the sets and their respective elements.

If the conditions of Proposition~\ref{thm:extremePointEMD} are satisfied and
\begin{equation}
D(x,x_t)=D(y,x_t)=\alpha,
\label{eq:extremePointAlpha}
\end{equation}
then we may note that $\emdhat_{x_t}=\emdhat_\alpha$.
We may also deduce that the ground distance is of the form $\beta-K$, where $K$ is positive definite.
As stated, we assumed that $x_t \in X$.
However, considering a hypothetical $x_t \not\in X$ does not appear to be too great of a leap for this form of ground distance, as we can always create such an element if desired, as outlined in the following proposition.
\begin{proposition}
A conditionally negative definite kernel $D : X \times X \to \mathbb{R} $ is of the form $\beta- K$, where $K$ is a positive definite kernel, \iffy{} there exists an element $x_t$ and $\alpha \geq \beta/2$ such that $D^*:  (X \cup \{x_t\}) \times  (X \cup \{x_t\}) \to \mathbb{R}$ is conditionally negative definite, where
\begin{equation}
D^*(x,y) = \begin{cases}
D(x,y), & \text{if } x,y \in X,\\
0, & \text{if } x=y=x_t,\\
\alpha, & \text{otherwise}.
\end{cases}
\end{equation}
\label{thm:pointThreshold}
\end{proposition}
\begin{proof}
The ``if" part is easily shown by an application of Lemma~\ref{thm:defK} with $x_0=x_t$, which yields $K^*=2\alpha-D^*$.
Since $D$ operates on a subset of the domain of $D^*$, this implies that $K=2\alpha-D$ is also positive definite and that $D$ is conditionally negative definite and of the required form with $\beta=2\alpha$.

The ``only if'' part requires us to prove that $D^*$ inherits its definiteness from $D$.
Therefore, consider the kernel
\begin{equation}
\begin{split}
K^*(x,y) &= D^*(x,x_t)+D^*(y,x_t)-D(x,y)\\
&=2\alpha-D(x,y).
\end{split}
\end{equation}
If $\alpha \geq \beta/2$, then
\begin{equation}
K^*(x,y) = K(x,y)+\epsilon,
\end{equation}
where $\epsilon=2\alpha-\beta \geq 0$.
Considering the feature map $\phi: X \to H$ such that $H$ is a Hilbert space and $K(x,y) = \left<\phi(x),\phi(y)\right>$, we may note that $K^*$ is equivalent to adding a constant extra feature of magnitude $\sqrt{\epsilon}$ to each feature vector $\phi(x)$.
We can conclude then that $K^*$ is equivalent to an inner product (the dot product) in this augmented Hilbert space and is therefore positive definite. 
By Lemma~\ref{thm:defK}, $D^*$ is conditionally negative definite.

\end{proof}
The preceding thought process leads to the following conjecture, an easy proof of which is currently limited by the inability to assume \textit{a priori} that there exists $x_t \in X$ satisfying \refreq{extremePointAlpha}.
A characterization of kernels of the form $\beta-K$ is given by Berg et al.~\mycite{bcp-haos}.
\begin{conjecture}
If $D$ is a conditionally negative definite kernel of the form $\beta-K$, where $\beta$ is a constant and $K$ is a positive definite kernel, and \ac{emd} is conditionally negative definite with ground distance $D$ for sets of constant size, then $\emdhat_{\alpha}$ is conditionally negative definite with ground distance $D$ for $\alpha \geq \beta/2$ on $\mathcal{F}(X)$.
\label{thm:betaMKForm}
\end{conjecture}

The conditional negative definiteness of \ac{emd} is intimately linked to the positive definiteness of EMI.
In fact, if there exists a conditionally negative definite ground distance $D$ for which \ac{emd} is conditionally negative definite for sets of constant size, then there exists a positive definite ground distance $K$ such that \ac{emd} computed from a maximum-cost maximum flow with respect to $K$ is positive definite for sets of constant size.
Let us assume that $K(x,y) = D(x,p)+D(y,p)-D(x,y)$ as per Lemma~\ref{thm:defK}.
Then
\begin{equation}
\begin{split}
\mathit{EMI}_{p}&(A,B)\\
 = &\sum_{a \in A}\sum_{b \in B}f(a,b)\left[D(a,p)+D(b,p)\right]\\
&-\emd(A,B)\\
= &\sum_{a \in A}\sum_{b \in B}f(a,b)\left[D(a,p)+D(b,p)\right]\\
&-\sum_{a \in A}\sum_{b \in B}f(a,b)D(a,b)\\
= &\sum_{a \in A}\sum_{b \in B}f(a,b)\left[D(a,p)+D(b,p)\right]\\
&-\sum_{a \in A}\sum_{b \in B}f(a,b)\left[D(a,p)+D(b,p)-K(a,b)\right]\\
= &\sum_{a \in A}\sum_{b \in B}f(a,b)K(a,b).
\end{split}
\end{equation}
Thus we see that EMI is equivalent to \ac{emd} formulated as a maximization instead of a minimization (and specification of $p$ is not required).
In order to see that one may simply maximize with respect to $K$ and still yield the same flow as a minimization with respect to $D$, note from \refreq{constantFlow1} and \refreq{constantFlow2} with $g(x) = D(x,p)$ that 
\begin{equation}
\sum_{a \in A}\sum_{b \in B}f(a,b)\left[D(a,p)+D(b,p)\right]
\end{equation}
is constant for fixed $A$, $B$ and does not depend upon the flow.
Furthermore, if \EMDHat[p] is conditionally negative definite as we conjecture, then we can conclude that EMI is positive definite on $\mathcal{F}(X)$ for certain positive definite ground distances.
\textit{To summarize}, EMD formulated as a maximization tends to inherit positive definiteness on $\mathcal{F}(X)$, and EMD formulated as a minimization tends to inherit conditional negative definiteness on $\mathcal{P}(X)$. 
The restrictions on set sizes for conditionally negative definite ground distances has an intriguing parallel to the conditions imposed on $c_i$  in the definitions of conditionally positive and negative definite kernels.
These observations also shed some light on how the pyramid match kernel~\mycite{gd-pmkelsf}, composed of intersections, can provide a suitable positive definite alternative to \ac{emd}.
As one final example of \ac{emd} inheriting the form of its ground distance for sets of constant size $s$, we may note that \ac{emd} also inherits the form $\beta-K$,
\begin{equation}
\begin{split}
\emd(A,B) &= \sum_{a \in A}\sum_{b \in B}f(a,b)\left[\beta-K(a,b)\right]\\
&=\sum_{a \in A}\sum_{b \in B}f(a,b)\beta-\sum_{a \in A}\sum_{b \in B}f(a,b)K(a,b)\\
&=\beta s-\emi_\beta(A,B),
\end{split}
\end{equation}
although we cannot guarantee that $K=\emi_\beta$ is positive definite without a proof of Conjecture~\ref{thm:betaMKForm}.
It is interesting to note that the discrete metric is of the form $\beta-K$, which is easily shown by observing that $1-\discrete$ is positive definite.
In fact, the discrete kernel $1-\discrete$ is what $\ptransform{K}{k}$ converges to for any initial $K$ satisfying $2K(x,y)=K(x,x)+K(y,y)$ \iffy{} $x=y$.

We also see a sufficient condition for \ac{emd}'s conditional negative definiteness in that if the quantity $f(a,b)K(a,b)$ is positive definite (where care is taken to distinguish flows between different sets), then EMI is trivially positive definite by an application of the derived subsets kernel~\mycite{stc-kmpa}.
Likewise, it is sufficient if $f(a,b)$ is itself positive definite since Theorem~\ref{thm:productK} would imply that $f(a,b)K(a,b)$ is positive definite.
We may note that this condition is not necessary, however, since for the discrete ground distance any unit of mass not mapped by identity can be given an arbitrary destination as part of some minimum-cost flow, and we can therefore easily construct non-positive definite flows.

\section{A Set Theoretic Interpretation of \ac{emd}}
Suppose we are given a measure space $(X, \mu, D)$.
Furthermore, suppose we have two subsets $A$ and $B$ of this space, each with finite measure.
The general idea is to express basic set functions such as intersection and set difference in terms of \ac{emd}.
Typically, a binary relation is assumed in which two elements are either equal or not.
In other words, typically the discrete metric is used to determine whether elements in two sets are the same or not.
Sometimes, though, we can quantify their degree of equality more precisely than a simple ``equal" or ``not equal."
To highlight an issue with the binary approach, consider two sets of two-dimensional points where one is a slightly perturbed version of the other.
The points can individually be compared by the Euclidean distance or some other norm.
According to the strict definition of set intersection given by \refreq{intersection}, their intersection is empty despite the fact that they are clearly related by their elements.
\ac{emd} provides a natural method to compare the sets, although it is proportional to their difference rather than intersection.
We will show, however, that one can use \ac{emd} to get a measure of the point sets' intersection that incorporates the chosen norm for comparing elements.
We will see that \ac{emd} and subsequent related functions define soft generalizations or approximations of classic set operations.
First, however, let us observe how classic set operations (of two sets) can be formulated as special cases of \ac{emd}.
\begin{proposition}
Let $\emd_{\text{0-1}}$ be \ac{emd} equipped with the discrete metric as the ground distance. If $\mu(A) \leq \mu(B)$, then
\begin{equation}
\emd_{\text{0-1}}(A,B)=\emd_{\text{0-1}}(B,A)=\mu(A \setminus B).
\end{equation}
\label{thm:emdIsSetDiff}
\end{proposition}
\begin{proof}
Since the discrete metric satisfies identity of indiscernibles and triangle inequality, the minimum-cost maximum flow from an element $a \in A$ to itself $a \in B$ is 
\begin{equation}
f(a,a) = \min\left\{f_A(a),f_B(a)\right\}.
\label{eq:inclusionFlow}
\end{equation}
The equality cannot be greater than as this would violate either contraint~\refreq{outFlow} or~\refreq{inFlow}.
The equality also cannot be less than since we could find a cheaper flow by enforcing~\refreq{inclusionFlow}, thus contradicting the fact that $f$ is minimum-cost.
The remainder of the mass not mapped by inclusion can be transported arbitrarily and still be minimum-cost, and thus 
\begin{equation}
\emd_{\text{0-1}}(A,B) = \sum_{a\in A} \sum_{b \in B} f(a,b)\discrete(a,b),
\end{equation}
where $f$ is an arbitrary flow satisfying \refreq{outFlow}, \refreq{inFlow}, \refreq{totalflow}, and \refreq{inclusionFlow}.
The proposition directly follows from these constraints:
\begin{equation}\small
\begin{split}
\sum_{a\in A} \sum_{b \in B} f(a,b)\discrete(a,b) =&\sum_{a\in A} \sum_{b \in B} f(a,b)-\sum_{a \in A}f(a,a)\\
=&\mu(A)-\sum_{a \in A}\min\left\{f_A(a),f_B(a)\right\},
\end{split}
\end{equation}
combined with \refreq{mudef} and \refreq{intersection},
\begin{equation}
\begin{split}
=&\mu(A)-\sum_{a \in A}f_{A\cap B}(a)\\
=&\mu(A)-\mu(A \cap B)\\
=&\mu(A \setminus B).
\end{split}
\end{equation}
\end{proof}
From Proposition~\ref{thm:emdIsSetDiff} we can easily deduce that set intersection is simply the size of the smaller set minus \ac{emd} equipped with the discrete metric, \ie
\begin{equation}
\mu(A \cap B) = \min\{\mu(A), \mu(B)\}-\emd_{\text{0-1}}(A,B).
\label{eq:muIntersection}
\end{equation}
We can also easily deduce the union using the well-known relation between intersection and union:
\begin{equation}
\begin{split}
\mu(A \cup B) =& \mu(A)+\mu(B)-\mu(A \cap B)\\
=&\mu(A)+\mu(B)-\min\{\mu(A), \mu(B)\}\\&+\emd_{\text{0-1}}(A,B)\\
=&\max\{\mu(A), \mu(B)\}+\emd_{\text{0-1}}(A,B).
\end{split}
\end{equation}
Finally, we can determine the larger of the set differences and the symmetric difference:
\begin{align}
\begin{split}
\mu(B \setminus A) &= \mu(B)- \mu(A \cap B)\\
&=\emd_{\text{0-1}}(A,B)+|\mu(A)- \mu(B)|,
\label{eq:muLargerDif}
\end{split}\\
\begin{split}
\mu(A \triangle B) &= \mu(A \cup B) - \mu(A \cap B)\\
&=2\emd_{\text{0-1}}(A,B)+|\mu(A)- \mu(B)|.
\label{eq:muSymmetricDif}
\end{split}
\end{align}
Observing these equations, it is relatively easy to extrapolate generalizations of these set operations by changing the ground distance.
We assume, however, that the ground distance is bounded by some threshold $t$, which indicates the point at which two elements are considered ``not equal."
For practical purposes, if the measure space $(X, \mu, D)$ is unbounded, then it suffices to choose $t$ greater than or equal to the diameter of the dataset.
Alternatively, one may apply a threshold to the ground distance to produce $D_t(x,y) = \min\left\{t, D(x,y)\right\}$.
Note, however, that applying a threshold will change the value of \ac{emd} and possibly affect it and the ground distance's properties.

Observing \refreq{muIntersection}, we may note the first result of this interpretation of \ac{emd}: a positive definite kernel~\mycite{btb-ghikir}~\mycite{obv-bkbsim} derived directly from \ac{emd}!
Shortly we will show that this implies that $\emd_{\text{0-1}}$ is conditionally negative definite and how this result can be generalized to other ground distances.
First, however, we introduce the generalized set operations \EMDHat{}, SEMD, and EMI and show how they fit into existing research on metrics and kernels.
Table~\ref{tab:setOps} summarizes the basic set operation to which each generalization corresponds.

\subsection{Metrics}
This subsection discusses \ac{emd} based functions that measure the differences between sets. 
Some of these functions are already known but have never been incorporated into a set-based interpretation.
We dub them metrics because it has been proven through various sources that they are metrics for metric ground distances.

\subsubsection{Symmetric EMD}
Pele and Werman introduced \EMDHat{} as a means to calculate \ac{emd} between unnormalized histograms~\mycite{pw-lthmism} for use in nearest neighbor calculations and image retrieval.
Their metric, however, readily fits into our set-based framework as a measure of the set difference of the smaller set from the larger.
After all, its form is almost identical to that of \refreq{muLargerDif}:
\begin{equation}
\emdhat_{\alpha}(A,B) = \emd(A,B)+\alpha |\mu(A)-\mu(B)|.
\end{equation}
\EMDHat{} also readily yields a generalization of the symmetric difference \refreq{muSymmetricDif}, which we call Symmetric Earth Mover's Distance (SEMD):
\begin{align}
\semd_{\alpha}(A,B) &= \emdhat_{\alpha}(A,B)+\emd(A,B)\\
&= 2\emdhat_{\alpha/2}(A,B).\label{eq:doubleEMDHat}
\end{align}
In both cases, $\alpha$ is a scaling factor that alters the threshold used for element comparison and the cost of an unmatched element.
Effectively, $\alpha$ denotes the value of one unit of measure.
Pele and Werman~\mycite{pw-lthmism} proved that \EMDHat{} is a metric for $\alpha \geq t/2$, assuming the ground distance is also a metric.
Note that our representation of \EMDHat{} differs slightly from the original in that we have merged $t$ into the value of $\alpha$.
With a discrete ground distance and $\alpha=1$, both \EMDHat{} and SEMD reduce to their respective classical set operations.

\begin{table}
\centering
\caption{A list of earth mover functions and their corresponding mimicked set operations between two sets $A$ and $B$. Here we assume $\mu(A)\leq\mu(B)$.}
\begin{tabular}{| c | c|}
\hlx{hv}
\bf Function & \bf Set Operation\\
\hlx{vhv}
$\emd$ & $A \setminus B$\\
$\emdhat$ & $B \setminus A$\\
$\semd$ & $A \triangle B$\\
$\emi$ & $A \cap B$\\
\hlx{h}
\end{tabular}
\label{tab:setOps}
\end{table}

\subsubsection{Jaccard EMD}
We can also generalize any function that can be expressed in terms of basic set operations. 
One such function is the Jaccard distance.
The Jaccard distance is the complement of the Jaccard index, both of which have been known under a variety of names including Steinhaus (for arbitrary measure), biotope, Tanimoto, and Marczewski-Steinhaus~\mycite{dd-eod}.
The Jaccard index $J$ and distance $J_D$ between two sets $A$ and $B$ are given by 
\begin{align}
\mathit{J}(A,B) &= \frac{\mu(A \cap B)}{\mu(A \cup B)},\\
\mathit{J_D}(A,B) &= 1-J(A,B)=\frac{\mu(A \triangle B)}{\mu(A \cup B)}. \label{eq:jaccardDist}
\end{align}
If $A=B=\emptyset$, then $J(A,B)$ is defined to be 1.
We can thus define the Jaccard Earth Mover's Distance (JEMD) as a generalization of \refreq{jaccardDist} by noting that $\mu(A \cup B) =\frac{1}{2}\left[\mu(A)+\mu(B)+\mu(A \triangle B)\right]$ and replacing the symmetric difference with SEMD:
\begin{align}
\jemd_\alpha(A,B) 
&= \frac{2\semd_\alpha(A,B)}{\alpha \left[\mu(A)+\mu(B)\right]+\semd_\alpha(A,B)}.
\end{align}
The metric properties of JEMD are easily derived by noting that it is the biotope transform (or $p$-smoothing transform with $p=\emptyset$)~\mycite{dd-eod} of SEMD and is therefore a metric whenever SEMD is a metric.
Note that through \refreq{doubleEMDHat} it is also equivalent to the biotope transform of $\widehat{\text{EMD}}_{\alpha/2}$.
With a discrete ground distance and $\alpha=1$, JEMD is equivalent to the Jaccard distance.
JEMD and even \EMDHat{} were originally proposed under different names for point sets by Ramon and Bruynooghe~\mycite{rb-ptcmbps}, although neither a connection to \ac{emd} nor the Jaccard index was acknowledged.
JEMD was independently discovered and generalized to multisets of arbitrary measure by Gardner et al.~\mycite{CVPR2014}, where it outperformed \ac{emd} in a nearest neighbor setting.
Lower and upper bounds for JEMD can be easily computed~\mycite{CVPR2014}:
\begin{equation}
\frac{|\mu(A)-\mu(B)|}{\max\left\{\mu(A),\mu(B)\right\}} \leq \jemd(A,B) \leq \mathit{J_D}(A,B).
\label{eq:jemdBounds}
\end{equation}

\subsection{Kernels}
Now we propose kernel forms derived from \ac{emd}'s connection with set intersection.
In particular, we propose Earth Mover's Intersection (EMI) and Earth Mover's Jaccard Index (EMJI).
Unlike the preceding metrics (which are proved to satisfy metric properties), the proposed kernels are not proved to be positive definite in the general case.

\subsubsection{Earth Mover's Intersection}
Similar to the derivations of SEMD and JEMD, the generalized set intersection EMI is relatively easy to determine:
\begin{equation}
\emi_\alpha(A,B) = \alpha \min\left\{\mu(A),\mu(B)\right\} - \emd(A,B).
\end{equation}
This can also be used to define a generalized pyramid match kernel~\mycite{gd-pmkelsf}, which uses histogram intersections at various resolutions to approximate a partial matching between sets of image features.
An exploration of this idea is beyond the scope of this paper.

\subsubsection{Earth Mover's Jaccard Index}
Given the definition of the Jaccard distance in terms of \ac{emd}, it should come as little surprise that we can define a generalization of the Jaccard index as well:
\begin{align}
\emji_\alpha(A,B) &= \frac{\emi_\alpha(A,B)}{\alpha \max\left\{\mu(A),\mu(B)\right\}+\emd(A,B)}\\
&=\frac{\alpha \left[\mu(A)+\mu(B)\right]-\semd_\alpha(A,B)}{\alpha\left[\mu(A)+\mu(B)\right]+\semd_\alpha(A,B)}\\
&=1-\jemd_\alpha(A,B)
\end{align}
One may note that it maintains the relationship implied by \refreq{jaccardDist} regardless of the ground distance.

\section{Experimental Evaluation}
Experiments were performed on three datasets to assess the performance of the proposed kernel forms for \ac{emd} in \ac{svm}-based classification.
Note that the purpose of the experiments is to examine the kernels' quality for classification, not design nor compete with state-of-the-art recognition systems.
In particular, we explored texture recognition, object category classification, and hand posture recognition.
In each case, we evaluated classification performance for both normalized and unnormalized sets, where a set is normalized by scaling its total mass to be 1.
Each of the experiments continues to rely on the assumption that \ac{emd} is conditionally negative definite for a Euclidean ground distance. 
For object category classification, we assume the same for a squared Euclidean ground distance. 
In addition, parts of the experiments rely on the unproven assumption that applying a threshold is a conditionally negative-definite-preserving transformation.
Our research did not discover any existing theorems regarding the following conjecture, which states the assumption more formally.
Pele and Werman show that a similar theorem for metrics is true~\mycite{pw-fremd}.
\begin{conjecture}
If $D:X \times X \to \mathbb{R}$ is a conditionally negative definite kernel and $t \in \mathbb{R}$, then the thresholded kernel
\begin{equation}
D_t(x,y) = \min\left\{ D(x,y), t\right\}
\end{equation}
is also conditionally negative definite.
\end{conjecture}
Multiple thresholds for each chosen ground distance were tested.
For all experiments, $\alpha=1$ may be presumed. 
A chosen threshold of $t$ denotes that only features within a $t$-radius hypersphere are considered comparable.
Pele and Werman also showed that thresholds enable faster computation of \ac{emd}~\mycite{pw-fremd}, and thus their algorithm was used to calculate \ac{emd} and its derivative kernels in each experiment.

Kernel-based \acp{svm} serve as the primary means of classification.
A positive or negative classification of a query $x$ is given by the sign of
\begin{equation}
g(x)=\sum_i \gamma_iK(s_i,x)+b,
\end{equation}
where $\gamma_i$ is the learned weight of the support vector $s_i$ and $b$ is a learned threshold.
One-versus-all \acp{svm} provided by MATLAB's Statistics Toolbox Release 2014a were used in each experiment.
In addition to EMI and EMJI, we considered generalized \ac{rbf} kernels for each of \ac{emd}, \EMDHat{}, and JEMD. 
The scaling factor $u$ of each \ac{rbf} kernel was calculated as the inverse of the average value of $D$ between training samples, which was reported by Zhang et al.~\mycite{zmls-lfkctoccs} as a serviceable substitute for a more rigorously obtained value via some other, more time-consuming method such as cross validation.

\begin{table*}
\centering
\caption{Accuracies and standard deviations for texture recognition.}
\begin{tabular}{| c ||c c c c c | c c c c c |}
\hlx{hv[2-5,7-10]} 
& \multicolumn{5}{|c|}{\bf Unnormalized}&  \multicolumn{5}{c|}{\bf Normalized} \\
\hlx{hv}
Threshold & $\emd$ & $\emdhat$ & $\jemd$ & $\emi$ & $\emji$ & $\emd$ & $\emdhat$ & $\jemd$ & $\emi$ & $\emji$\\
\hlx{vhv}
0.5 & 57.45 & 51.95 & 62.55 & 61.95 & \textbf{64.00} & 70.05 & 70.05 & 69.75 & 68.40 & 71.10\\
  & $\pm 1.28 $ & $\pm 0.65 $ & $\pm 0.51 $ & $\pm 0.86 $ & $\mathbf{\pm 1.08}$ & $\pm 6.34 $ & $\pm 6.34 $ & $\pm 5.91 $ & $\pm 5.56 $ & $\pm 5.88$ \\
1 & 34.90 & 50.30 & 55.70 & 61.65 & 58.30 & 72.40 & 72.40 & 72.70 & 72.10 & \textbf{73.05}\\
  & $\pm 1.43 $ & $\pm 0.54 $ & $\pm 0.60 $ & $\pm 0.82 $ & $\pm 0.62 $ & $\pm 0.89 $ & $\pm 0.89 $ & $\pm 0.72 $ & $\pm 1.43 $ & $\mathbf{\pm 0.97}$\\
1.4142 & 34.40 & 45.10 & 51.45 & 60.85 & 55.55 & 68.35 & 68.35 & 67.30 & 68.15 & 67.70\\
  & $\pm 0.52 $ & $\pm 0.82 $ & $\pm 0.67 $ & $\pm 0.63 $ & $\pm 0.62$ & $\pm 7.35 $ & $\pm 7.35 $ & $\pm 6.62 $ & $\pm 7.39 $ & $\pm 7.11$ \\
\hlx{h}
\end{tabular}
\label{tab:texture}
\vskip 5 pt
\caption{Accuracies and standard deviations for object category classification.}
\begin{tabular}{| c ||c c c c c | c c c c c |}
\hlx{hv[2-5,7-10]} 
& \multicolumn{5}{|c|}{\bf Unnormalized}&  \multicolumn{5}{c|}{\bf Normalized} \\
\hlx{hv}
Threshold & $\emd$ & $\emdhat$ & $\jemd$ & $\emi$ & $\emji$ & $\emd$ & $\emdhat$ & $\jemd$ & $\emi$ & $\emji$\\
\hlx{vhv}
0.5 & 13.77 & 45.81 & 44.69 & \textbf{49.14} & 47.37 & 46.69 & 46.69 & 44.53 & 46.64 & \textbf{46.94}\\
 & $\pm 0.86 $ & $\pm 1.54 $ & $\pm 1.12 $ & $\mathbf{\pm 1.30} $ & $\pm 1.43$ & $\pm 1.26 $ & $\pm 1.26 $ & $\pm 1.13 $ & $\pm 0.45 $ & $\mathbf{\pm 1.21}$ \\
1 & 9.40 & 42.92 & 44.33 & 44.61 & 45.29 & 46.10 & 46.10 & 45.57 & 45.17 & 46.27\\
 & $\pm 0.93 $ & $\pm 0.85 $ & $\pm 0.78 $ & $\pm 0.98 $ & $\pm 1.30$ & $\pm 1.28 $ & $\pm 1.28 $ & $\pm 1.05 $ & $\pm 1.22 $ & $\pm 1.10$ \\
2 & 8.49 & 39.64 & 42.86 & 42.06 & 43.13 & 45.94 & 45.94 & 45.87 & 44.99 & 45.64\\
 & $\pm 0.76 $ & $\pm 1.37 $ & $\pm 1.22 $ & $\pm 1.07 $ & $\pm 1.40$ & $\pm 1.34 $ & $\pm 1.34 $ & $\pm 1.31 $ & $\pm 1.38 $ & $\pm 1.33$ \\
\hlx{h}
\end{tabular}
\label{tab:object}
\vskip 5 pt
\caption{Accuracies and standard deviations for posture recognition.}
\begin{tabular}{| c ||c c c c c | c c c c c |}
\hlx{hv[2-5,7-10]} 
& \multicolumn{5}{|c|}{\bf Unnormalized}&  \multicolumn{5}{c|}{\bf Normalized} \\
\hlx{hv}
Threshold & $\emd$ & $\emdhat$ & $\jemd$ & $\emi$ & $\emji$ & $\emd$ & $\emdhat$ & $\jemd$ & $\emi$ & $\emji$\\
\hlx{vhv}
25 & 37.20 & 80.87 & 79.33 & 76.33 & 80.44 & 80.31 & 80.31 & 79.69 & 78.60 & 79.96\\
 & $\pm 16.56 $ & $\pm 11.11 $ & $\pm 12.37 $ & $\pm 14.59 $ & $\pm 12.03 $ & $\pm 10.45 $ & $\pm 10.45 $ & $\pm 10.51 $ & $\pm 11.14 $ & $\pm 10.54$\\
50 & 38.96 & 90.91 & 89.42 & 85.56 & 89.13 & 89.64 & 89.64 & 88.42 & 82.96 & 89.49\\
  & $\pm 18.65 $ & $\pm 12.03 $ & $\pm 12.49 $ & $\pm 13.80 $ & $\pm 12.89 $ & $\pm 11.44 $ & $\pm 11.44 $ & $\pm 11.55 $ & $\pm 17.45 $ & $\pm 11.46$\\
100 & 32.80 & 95.02 & 93.53 & 87.42 & 94.58 & 93.53 & 93.53 & 92.56 & 93.82 & 93.42\\
  & $\pm 20.22 $ & $\pm 6.37 $ & $\pm 7.80 $ & $\pm 14.26 $ & $\pm 6.65 $ & $\pm 6.54 $ & $\pm 6.54 $ & $\pm 7.38 $ & $\pm 5.03 $ & $\pm 6.18$\\
150 & 28.96 & 95.47 & 95.13 & 86.78 & \textbf{95.82} & 95.13 & 95.13 & 94.78 & \textbf{96.64} & 95.36\\
  & $\pm 22.31 $ & $\pm 6.40 $ & $\pm 6.73 $ & $\pm 15.21 $ & $\mathbf{\pm 6.00} $ & $\pm 5.56 $ & $\pm 5.56 $ & $\pm 5.88 $ & $\mathbf{\pm 2.27 }$ & $\pm 5.30$\\
200 & 29.73 & 95.09 & 94.07 & 86.49 & 95.44 & 93.82 & 93.82 & 93.40 & 94.33 & 94.36\\
  & $\pm 18.65 $ & $\pm 6.73 $ & $\pm 7.47 $ & $\pm 15.20 $ & $\pm 6.20 $ & $\pm 6.75 $ & $\pm 6.75 $ & $\pm 7.03 $ & $\pm 6.56 $ & $\pm 6.66$\\
\hlx{h}
\end{tabular}
\label{tab:posture}
\end{table*}
\subsection{Texture Recognition}
The KTH-TIPS database~\mycite{hcfe-osrwcmc} is comprised of 10 texture classes under varying scale, pose, and illumination with 81 instances per class.
We adopted much of the experimental design of Zhang et al.~\mycite{zmls-lfkctoccs}, extracting features using Vedaldi and Fulkerson's implementation of the SIFT descriptor~\mycite{vf-vlfeat} with the Difference of Gaussians region detector.
The SIFT descriptor~\mycite{l-difsik} computes an $N$-bin histogram of image gradient orientations for an $M \times M$ grid of samples in the region of interest, resulting in an $M\times M\times N$ dimensional vector.
The resulting vectors are scaled to have a Euclidean norm of 1 to reduce the influence of illumination changes.
The descriptors are then clustered using a $k$-means algorithm (with $k=40$) to produce so-called image \textit{signatures}.
Note that the chief difference between our SIFT descriptors and those used by Zhang et al.~\mycite{zmls-lfkctoccs} is the estimation of the dominant gradient orientation.
The Euclidean distance is used as the ground distance between SIFT descriptors both for clustering and classification (\ie \ac{emd} calculation).
Since all of the SIFT descriptors are normalized with non-negative components,  we can consider the Euclidean ground distance to operate on the set $X = (Y \times \left\{0\right\}) \cup Z$ where 
\begin{align}
Y &= \left\{ \vec{x} \suchthat \vec{x}^\intercal\vec{x} = 1, \vec{x} \in \mathbb{R}^{M\times M \times N}\right\},\\
Z &=\left\{0\right\}^{M\times M \times N} \times \left\{1\right\}.
\end{align}
It is relatively easy to show that the diameter of $Y$ is $\sqrt{2}$.
In addition, $x_t \in Z$ is exactly a distance of $\sqrt{2}$ from every element of $Y$.
Assuming \ac{emd} is conditionally negative definite for this ground distance and thresholded versions of it, we may thus ensure the positive definiteness of EMI$_{x_t}$, EMJI$_{x_t}$, and the generalized \acp{rbf} by applying Proposition~\ref{thm:extremePointEMD} and following the proven implications outlined in Figure~\ref{fig:implications}.
Thresholds of 0.5, 1, and $\sqrt{2}$ (unthresholded) are used to examine the effect that the threshold has on performance.

For each class and threshold, 5 pairs of disjoint sets of 40 training samples and 40 testing samples were randomly selected, yielding 5 training sets and 5 corresponding test sets. 
Given the balanced representation among classes, accuracy was deemed an appropriate tool to measure the classification performance.
The results for each training and test pair were then averaged and are reported in Table~\ref{tab:texture}

\subsection{Object Category Classification}
The methodology used for object category classification is similar to that of texture recognition, primarily differing in the dataset and choice of feature extraction.
The Caltech-101 dataset~\mycite{ffp-osloc} is comprised of 101 categories (\eg face, car, etc.) with varied presentation as well as an extra category for unknown (BACKGROUND\_Google).
Instead of SIFT descriptors, the PHOW descriptor implemented by Vedaldi and Fulkerson~\mycite{vf-vlfeat} is used to represent images.
At a high level, the PHOW descriptor is a dense SIFT extractor (the regions of interest are densely sampled in a grid) that operates on multiple color channels instead of just grayscale. 
The squared Euclidean distance serves as the ground distance between the descriptors, and thresholds of 0.5, 1, and 2 (unthresholded) are applied to it.
The similarity of the PHOW and SIFT descriptors allows the same $X$, $Y$, and $Z$ from texture recognition to be used to derive positive definite kernels, provided that \ac{emd} is conditionally negative definite for a squared Euclidean ground distance.
For each threshold and each class with the exception of BACKGROUND\_Google, 5 pairs of disjoint sets of 15 training samples and 15 test samples were randomly selected (as opposed to 40 for texture recognition) for classification, the results of which are reported in Table~\ref{tab:object}.

\subsection{Posture Recognition}
The dataset used for posture recognition is comprised of 5 hand postures captured for 12 users using a Vicon motion capture camera system and a glove with attached infrared markers on certain joints.
A rigid pattern on the back of the glove is used to establish a translation and rotation invariant local coordinate system for the hand's markers.
The five postures in the dataset are fist, pointing with one finger, pointing with two fingers, stop (hand flat), and grab (fingers curled), each represented in the dataset by
several hundred instances per user captured as parts of intermittent streams where the user held the posture for a short time.
Instances are variable-size (due to occlusion) unordered sets of 3D points and are preprocessed by transforming to local coordinates and removing all markers more than 200 mm from the origin.
Due to the streaming nature of the data capture, it is likely that for an instance of a given user there will be a duplicate or near duplicate within the user's dataset, \ie the postures for a given user are highly correlated.
Therefore, we adopted a leave-one-user-out evaluation strategy.
In addition, this strategy allows us to measure the ability of the classifier to generalize to users it has not seen before, just as it would need to do in practice.
The Euclidean distance (on $\mathbb{R}^3$) is again chosen as the ground distance, and $x_t$ is chosen to be a sufficiently distant point so as to enable application of Proposition~\ref{thm:extremePointEMD}.
Thresholds of 25, 50, 100, 150, and 200 are applied to the ground distance, and we may note that in this scenario they are easier to interpret as they correspond to straight-line distances measured in millimeters. 
Effectively, if threshold $t$ is used, we are stating that only points within $t$ mm of each other can possibly represent the same physical marker.

For each threshold, class, and user, 75 instances were randomly chosen without replacement.
Since we performed a leave-one-user-out evaluation on 12 users, this yielded 12 pairs of disjoint training/test sets. 
The results of the classification are contained in Table~\ref{tab:posture}.
This dataset served as the original motivation for this work, and thus we place greater emphasis on the actual accuracy obtained, noting that the results outperform the best method presented in~\mycite{SMC2014} by nearly 10\% and with lower deviation.


\subsection{Discussion}
The first thing we may note is that all kernels, regardless of whether the sets were normalized, perform significantly better than the default \ac{emd} \ac{rbf} kernel on unnormalized sets, which appears to completely fall apart.
This observation reinforces the intuition that \ac{emd} can only be conditionally negative definite for sets of equivalent mass.
On the other hand, although the number of samples is small, there appears to be a general trend wherein \ac{emd} improves with smaller thresholds.
Presumably, this trend is due to \ac{emd} approaching conditional negative definiteness through increasingly binary element comparisons.
In other words, the error in \ac{emd} as an approximation of set difference is becoming smaller and smaller.
At some point, though, one can expect the improvement to drop off as the measure for equality becomes too sharp to quantify any similarity beyond identity.
This behavior may explain the slight decrease in accuracy observed when moving from a threshold of 50 to 25 mm in Table~\ref{tab:posture}.
One may also note a sharp drop in accuracy for the rest of the kernels at this same change in threshold.

The choice of threshold for the other kernels also clearly plays a role, although a trend is not nearly as uniform across experiments as it is for \ac{emd}. 
For posture recognition, the threshold yielding peak performance is relatively high at 150, whereas for the others it tends to be low.
The terms high and low in this sense are misleading, however, without more information about the distribution of distances. 
Choosing the optimal threshold, however, is clearly an important task for maximizing classification accuracy.
An open question is whether there is one optimal threshold or if there exist local extrema.
A more detailed analysis of the threshold's role is beyond the scope of this paper, but may be worth exploring.

If we consider each choice of normalization a separate experiment, then EMJI performed best in 4 out of 6 experiments (the other two of which belong to EMI).
If we consider each threshold and choice of normalization a separate experiment, then there are a total of 22 experiments.
EMJI performed best in 10 of these, \EMDHat[] in 7, and EMI in 5.
The standard \ac{emd} kernel performed best in 4 experiments that form a subset of \EMDHat[]'s 7 best.
Beyond the obvious disadvantage of \ac{emd} for unnormalized sets, one cannot say that any kernel is necessarily better than another from the data.
Note, however, that whereas \ac{emd}, \EMDHat[], and JEMD have the \ac{rbf} parameter $u$ chosen to ideally boost their accuracy, EMI and EMJI have no such parameter other than $\alpha$, which was chosen neutrally to be 1 for all relevant kernels.
Regardless of this fact, both EMJI and EMI still outperform the \ac{rbf} kernels in the majority of experiments. 
The apparent slight advantage of EMJI in particular is intriguing, although a full exploration of EMJI and the more general transform $\ptransform{K}{}$ defined by \refreq{pdpT} is beyond the scope of the paper.
An open question is whether nesting transformations can yield any benefit.
\fi

\iftrue
\else
\begin{IEEEbiographynophoto}{Andrew Gardner}
received the B.S. degree in computer science from Louisiana Tech University, Ruston, LA, USA, in 2012 and the M.S. degree in computer science from Louisiana Tech University in 2015.
In the summer of 2015, he worked at Lawrence Livermore National Laboratory as a high energy density physics summer student intern.
He is currently pursuing a Ph.D. degree in computational analysis and modeling at Louisiana Tech University.
His research interests include machine learning, graph theory, and transportation theory.
\end{IEEEbiographynophoto}
\begin{IEEEbiographynophoto}{Christian A. Duncan} earned the B.S. degree in computer science from
Johns Hopkins University, Baltimore, MD, USA, in 1994, the
M.S.E. degree in computer science from Johns Hopkins University in
1994, and the Ph.D. degree in computer science from Johns Hopkins
University in 1999.

From 1999 to 2000, he was a Postdoctoral Fellow at the
Max-Planck-Institut f\"ur Informatik, Saarbr\"ucken, Germany. From
2000 to 2006, he was an Assistant Professor of Computer Science at the
University of Miami, Miami, FL, USA.  From 2006 to 2012, he was an
Assistant Professor of Computer Science at Louisiana Tech University,
Ruston, LA, USA.  He is currently an Associate Professor of Computer
Science at Quinnipiac University, Hamden, CT, USA.  His research
interests and expertise lie in the area of developing and analyzing
algorithms for geometric and graph visualization problems.
\end{IEEEbiographynophoto}
\begin{IEEEbiographynophoto}{Jinko Kanno}
 earned her Ph.D in mathematics, graph theory, at Louisiana State University in 2003; before coming to the United States, she also earned her Masters in mathematics, topology in low dimensional manifolds and knot theory. Her research interests are in structural graph theory such as splitter theorems, in topological graph theory, and in applying mathematics towards problems in computer science and engineering. She is currently an associate professor of mathematics and statistics at Louisiana Tech University.
\end{IEEEbiographynophoto}
\newpage
\begin{IEEEbiographynophoto}{Rastko R. Selmic}
is an AT\&T Professor of Electrical Engineering at Louisiana Tech University. He was a Research Fellow at the Air Force Research Laboratory (AFRL) during summer 2015, 2008, 2007. He obtained the Bachelor’s degree in Electrical Engineering at the University of Belgrade in 1994, and then received his Master’s and Ph.D. degrees in Electrical Engineering at the University of Texas at Arlington in 1997 and 2000, respectively. From 1997 to 2002 he was a Lead DSP Systems Engineer at Signalogic, Inc. in Dallas, Texas. From 2002 to 2008 he was an Assistant Professor and from 2008 to 2014 an Associate Professor of Electrical Engineering at Louisiana Tech University.

Dr. Selmic's current research interests include smart sensors and actuators, cooperative sensing and control, gesture-based computing and control, and Micro-Aerial Vehicles (MAV) control. He is the author/co-author of one U.S. patent with four additional reports of invention, 65 journal and conference papers, 3 book chapters, and the book Neuro-Fuzzy Control of Industrial Systems with Actuator Nonlinearities, SIAM Press, Philadelphia, PA, 2002. Dr. Selmic received 2009 IFM Award for Outstanding Publication, ARRI Invention Award in 2000, the first prize at the IEEE Fort Worth Section Graduate Paper Contest in 1999, was a finalist for the Best Paper Award at IEEE International Conference on Control Applications in 1998, ARRI Best Paper Award in 1997 and Scholarship from Signalogic Inc. in 1996. He served as a proposal reviewer for National Science Foundation and Atlantic Innovation Fund, Canada. Dr. Selmic served as an Associate Editor for IEEE Transactions on Neural Networks and currently serves as an Associate Editor for IEEE Transactions on Cybernetics. He is a Senior Member of IEEE and Sigma Xi. He consulted for numerous companies including Andrew Corporation (Richardson, TX), Intelligent Automation Inc. (Rockville, MD), Davis Technologies International, Inc. (Dallas, TX), and American GNC Corp. (Simi Valley, CA).
\end{IEEEbiographynophoto}



\fi

\end{document}